\newcommand{\vx}{\bm{x}}
\DeclareMathOperator*{\argmax}{arg\,max}
\newcommand{\sgn}{\text{sign}}
\newtheorem*{proposition*}{Proposition}
\newtheorem*{theorem*}{Theorem}
\theoremstyle{definition}
\title{Exploiting the Relationship Between Kendall's Rank Correlation and Cosine Similarity for Attribution Protection}
\author{%
  Fan Wang\textsuperscript{\normalfont 1,2} \qquad Adams Wai-Kin Kong\textsuperscript{\normalfont 1}\\
  \textsuperscript{1} School of Computer Science and Engineering, Nanyang Technological University\\
  \textsuperscript{2} Rapid-Rich Object Search (ROSE) Lab, IGP, Nanyang Technological University\\
  \texttt{fan005@e.ntu.edu.sg}\quad\texttt{adamskong@ntu.edu.sg}
}
\begin{document}

\maketitle

\begin{abstract}
    Model attributions are important in deep neural networks as they aid practitioners in understanding the models, but recent studies reveal that attributions can be easily perturbed by adding imperceptible noise to the input. The non-differentiable Kendall's rank correlation is a key performance index for attribution protection. In this paper, we first show that the expected Kendall's rank correlation is positively correlated to cosine similarity and then indicate that the direction of attribution is the key to attribution robustness. Based on these findings, we explore the vector space of attribution to explain the shortcomings of attribution defense methods using $\ell_p$ norm and propose integrated gradient regularizer (IGR), which maximizes the cosine similarity between natural and perturbed attributions. Our analysis further exposes that IGR encourages neurons with the same activation states for natural samples and the corresponding perturbed samples. Our experiments on different models and datasets confirm our analysis on attribution protection and demonstrate a decent improvement in adversarial robustness.
\end{abstract}

\section{Introduction}\label{sec:intro}
Recently, the explainable artificial intelligence (XAI) has revived since deep neural networks (DNNs) are applied to more security-sensitive tasks such as medical imaging \citep{singh2020explainable}, criminal justice \citep{deeks2019judicial} and autonomous driving \citep{lorente2021explaining}. As one of the XAI tools, model attributions explain and measure the relative impact of each feature on the final prediction. With more non-expert practitioners being involved, it is more important for them to understand and reliably interpret the mechanism behind the outputs. Besides, EU regulators also start to enforce \emph{General Data Protection Regulation} for more transparent interpretations on decision making based on AI \citep{goodman2017european}. Therefore, the trustworthy attribution is becoming even more crucial.

Although numerous attribution methods have been proposed in recent studies \citep{simonyan2013deep,zeiler2014visualizing,shrikumar2017learning,sundararajan2017axiomatic,zintgraf2017visualizing}, it has been pointed out that they are vulnerable to attribution attacks. Different from standard adversarial attacks \citep{szegedy2014intriguing,goodfellow2014explaining,kurakin2016adversarial,papernot2016limitations,carlini2017towards} that focus on misleading classifiers to incorrect outputs, \citet{ghorbani2019interpretation} shows that it is possible to generate visually indistinguishable images which are significantly different on their attributions, but with the same predicted label. \citet{dombrowski2019explanations} emphasizes on targeted attack that manipulates the attributions to any predefined target attributions while keeping the model outputs unchanged. There are also black-box attacks applied on text explanations \citep{ivankay2022fooling}. Common adversarial defense mechanisms such as adversarial training \citep{madry2018towards} and distillation \citep{papernot2016distillation} are not able to tackle the attribution attacks; instead, researchers turn their focus on the attribution itself. 

\begin{figure*}
    \centering
    \includegraphics[width=.24\textwidth]{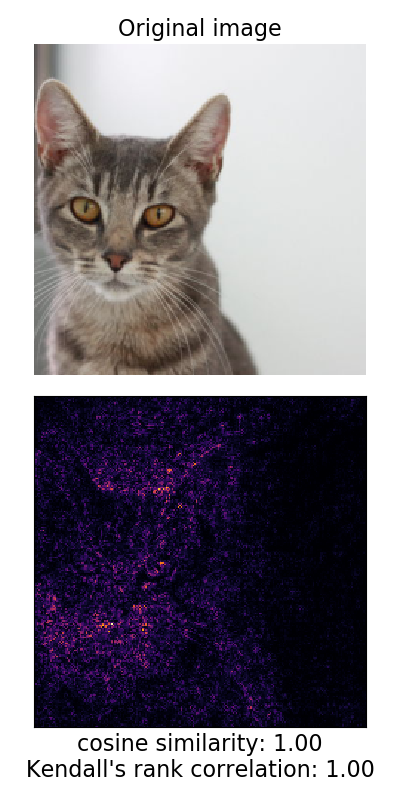}
    \includegraphics[width=.24\textwidth]{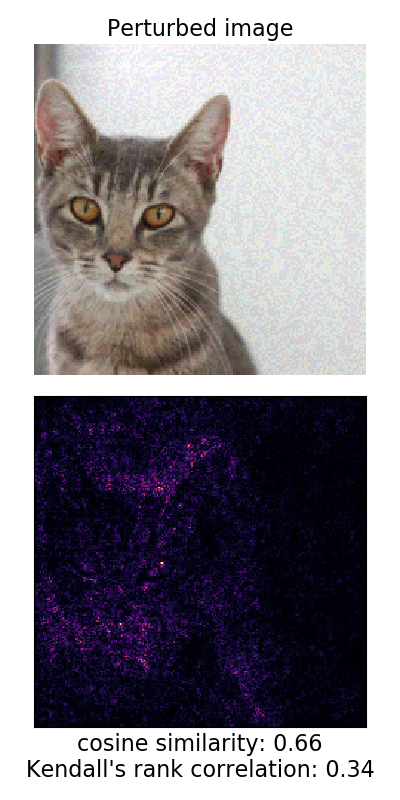}
    \includegraphics[width=.24\textwidth]{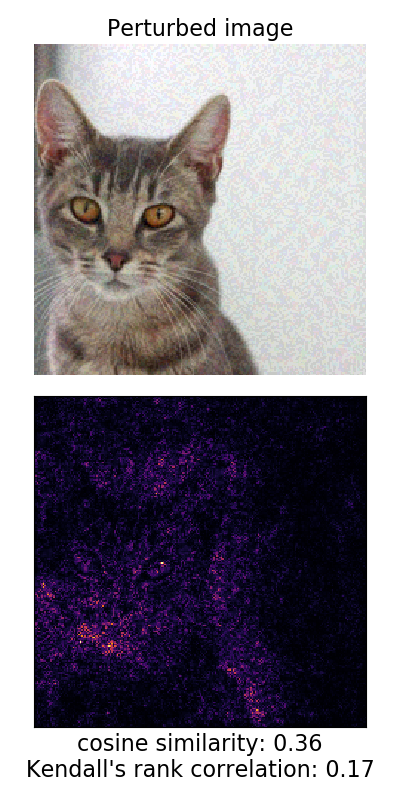}
    \includegraphics[width=.24\textwidth]{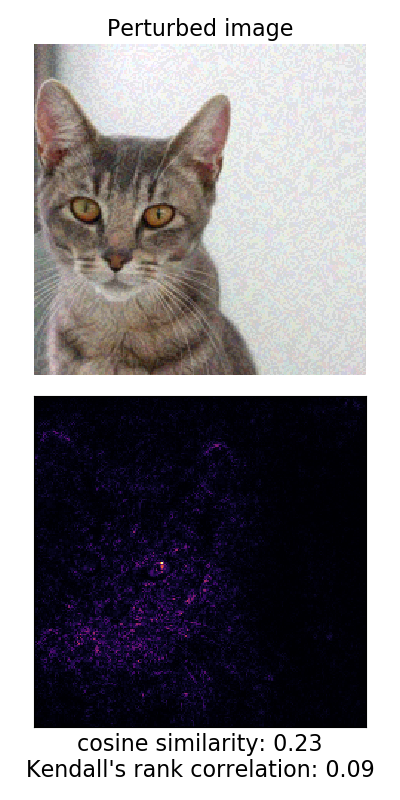}
    \caption{A visualization of integrated gradients of perturbed images by restricting $\ell_1$ distance. The three perturbed attributions (the bottom images of the 2nd--4th columns) have the same $\ell_1$ distance ($d=0.7$) to the original attribution (the bottom image of the first column). While $\ell_1$ distance remains unchanged, Kendall's rank correlations are not guaranteed to be close. However, the cosine similarities reflect the changes of the Kendall's rank coefficients.}\label{fig:motivation}
\end{figure*}
As the differences between natural and perturbed attributions are measured by \emph{Kendall's rank correlation} \citep{kendall1948rank}, which reflects the ordinal importance among features, \emph{i.e.}, the proportion of order alignment of attributions between original and perturbed images, a straightforward practice to protect the attributions against such adversaries is to maximize their Kendall's rank correlation. Since Kendall's rank correlation is not differentiable, in previous studies, it is replaced by its differentiable alternatives, such as $\ell_p$-distance regularizers \citep{chen2019robust,boopathy2020proper}. However, $\ell_p$-distance regularizers are not ideal for Kendall's rank correlation. As shown in Fig.~\ref{fig:motivation}, we found that given fixed $\ell_1$-distance between original and perturbed attributions, their Kendall's rank correlations are drastically different, which indicates $\ell_1$-distance is unstable as a measure of attribution similarity. Besides, there are also non-$\ell_p$ based regularizers, such as using Pearson's correlation, as the surrogate measurement of Kendall's rank correlation \citep{ivankay2020far}, it is shown to be unstable to measure the attribution.

In this paper, we discover that \emph{cosine similarity}, as a measurement emphasizing the angle between two vectors, is consistent with Kendall's rank correlation. We present a theorem stating that cosine similarity is positively correlated with the expected Kendall's rank correlation. Based on the discovery of angular perspective, we then explain the shortcomings of $\ell_p$-norm based attribution robustness methods and propose \emph{integrated gradients regularizer (IGR)}, an attribution robustness training regularizer that optimizes on the cosine similarity between natural and perturbed attribution. Our further analysis shows that optimizing cosine similarity encourages neurons with the same activation states. The contributions of this work are summarized as follows:
\begin{itemize}
    \item We theoretically show that, under certain assumptions, Kendall's rank correlation between two vectors is positively correlated to their cosine similarity.
    \item We characterize a novel geometric perspective related to the angles between attribution vectors that explains the connection between adversarial robustness and attribution robustness for attribution methods fulfilling the axiom of completeness \citep{sundararajan2017axiomatic}.  
    \item Under the angular perspective, we propose \emph{integrated gradients regularizer} (IGR) to robustly train neural networks. Our method is proved to encourage neurons with the same activation states for natural and corresponding perturbed images.
    \item The experimental results show that the proposed IGR regularizer can be embedded into adversarial training methods to improve their performance in terms of both attribution and adversarial robustness and outperform the state-of-the-art attribution protection methods.
\end{itemize}

The remainder of this paper is organized as follows. We first introduce the notations and previous related works in Section~\ref{sec:related}. The content starts with the theorem disclosing the relationship between Kendall's rank correlation and cosine similarity in Section~\ref{sec:kendall}. Based on that, we discuss the vector space of attribution in Section~\ref{sec:geometric} and describe the proposed IGR as well as its property regarding neuron activations in Section~\ref{sec:igr}. Section~\ref{sec:exp} presents our experimental results and the paper concludes in Section~\ref{sec:conclusion}.

\section{Preliminaries and related work}\label{sec:related}
Let $\{(\bm{x}^{(i)}, y^{(i)})\}_{i=1}^n$ denote data points sampled from the distribution $\mathcal{D}$, where $\bm{x}^{(i)}\in\mathbb{R}^d$ are input data and $y^{(i)}\in\{1, \ldots, k\}$ are labels. A non-bold version $x_i$ denotes the $i$-th feature of vector $\bm{x}$, and the capitalized version $X$ denotes a random variable. A classifier is the mapping from input space to the logits $f:\mathbb{R}^d\rightarrow\mathbb{R}^k$ parameterized by $\bm{\theta}$, where $f_j(\bm{x})$ is the $j$-th entry of $f(\bm{x})$, and the classification result of input $\bm{x}$ is given by the index of maximum logit $\hat{y}=\argmax_{1\leq j\leq k}(f_j(\bm{x}))$. 

\subsection{Attribution methods}
Model attribution, denoted by $g(\bm{x})$, studies the importance that the input features contribute towards the final result. The mostly used attribution methods include perturbation-based \citep{zeiler2014visualizing,zintgraf2017visualizing} and backpropagation-based methods \citep{bach2015pixel}, including gradient-based attribution methods \citep{simonyan2013deep,shrikumar2017learning}. In particular, integrated gradients (IG) \citep{sundararajan2017axiomatic}, one of the gradient-based methods, computes the attribution using the line integral of gradients from a baseline image $\bm{a}$ to the input image $\bm{x}$ weighted by their difference, \emph{i.e.},
\begin{equation}
g(\bm{x})_i = (x_i - a_i) \times \int_0^1 \frac{\partial f_y(\bm{a} + \alpha(\bm{x}-\bm{a}))}{\partial x_i}\,d\alpha.\label{eqn:ig}
\end{equation}
IG satisfies the axiom of completeness which guarantees $\sum_i g(\bm{x})_i = f_y(\bm{x}) - f_y(\bm{a})$. We omit the baseline image $\bm{a}$ in the later parts of this paper, and it is chosen to be a black image, \emph{i.e.}, $\bm{0}$, if not specifically stated. %

\subsection{Attribution robustness}
Recent studies reveal the vulnerability of neural networks that, similar to adversarial examples, imperceptible perturbations added to natural images would have significantly different attribution while their classification results remain unchanged \citep{ghorbani2019interpretation}. \citet{heo2019fooling} manipulates the model parameters instead of input images to disturb attributions and remains high accuracy on classifications. \citet{dombrowski2019explanations} makes targeted attack that changes original attributions to any predefined attributions and gives a theoretical explanation to this phenomenon.

\citet{engstrom2019adversarial} points out that robust optimization enhances model representations and interpretability. \citet{chen2019robust} and \citet{boopathy2020proper} use $\ell_1$-norm to constrain the distance between attributions of natural and perturbed images. \citet{sarkar2021enhanced} proposes a contrastive regularizer that emphasizes a skewed distribution on true class attribution while a uniform one on negative class attribution. \citet{ivankay2020far} directly optimizes Pearson's correlation and \citet{singh2020attributional} uses a triplet loss to minimize the upper bound of the attribution distortion. Although the previous techniques present promising results, none of them exploits the angle between attributions explicitly for attribution protection. The method introduced in this work leverages the relationship between Kendall's rank correlation and cosine similarity with a theoretical support for attribution robustness.

\section{Kendall's rank correlation and cosine similarity}\label{sec:kendall}

Kendall's rank correlation, often denoted by $\tau$, is a measurement of the ordinal relationship between two quantities, where two quantities have higher $\tau$ when they have more \emph{concordant} pairs. Formally, Kendall's rank correlation between two vectors $\bm{x}$ and $\bm{x}'$ can be explicitly computed by
\begin{equation}
    \tau = \frac{2}{d(d-1)}\sum_{i<j}\sgn(x_i-x_j)\sgn(x'_i-x'_j),
\end{equation}
where $d$ is the dimension of the vectors. As Kendall's rank correlation is an important metric to quantify the differences between perturbed and natural attributions, we begin by presenting the relationship between Kendall's rank correlation and cosine similarity. It should be highlighted that all the previous attribution robustness studies \citep{ghorbani2019interpretation,chen2019robust,boopathy2020proper,ivankay2020far,singh2020attributional,wang2020smoothed} use Kendall's rank correlation as a key performance index to evaluate the effectiveness of their methods.

To enhance attribution robustness, it is equivalent to force the perturbed attribution to have a higher Kendall's rank correlation with the original one. However, as Kendall's rank correlation is not differentiable, it is difficult to directly optimize it. It is necessary to find an alternative that either approximates to or has a consistent behavior with Kendall's rank correlation. The following theorem states that cosine similarity is an appropriate replacement as a regularization term since it is positively related to the Kendall's rank correlation (Fig.~\ref{fig:kendall}).
\begin{restatable}{theorem}{thmkends}\label{theo:kends}
    Given a random vector $Y=(y_1, y_2, \ldots, y_d)$ where $y_i$ follows a positive-valued distribution, and two arbitrary vectors with the same dimension, $X, X'\in\mathbb{R}^d$ that $x_i, x'_i\geq 0$, assume that there exists a sequence $\mathcal{S} = \left\{X_i\right\}_{i=1}^N$ with $X=X_0$ and $X'=X_N$, where the vectors satisfy the condition that $\cos(X_i, Y)\geq\cos(X_{i+1}, Y)$, and each $X_{i+1}$ can be induced from its previous vector $X_i$ through one of the following two operations,
    \begin{enumerate}[(i)]
        \item arbitrarily exchanging two entries of $X_i$
        \item multiplying one entry in $X_i$ by $\alpha\in(0, 1]$
    \end{enumerate}
    Then Kendall's rank correlations of $Y$ with $X$ and $X'$ have the property that $\mathbb{E}\left[\tau(X, Y)\right] \geq \mathbb{E}\left[\tau(X', Y)\right]$, where the expectation is taken over $Y$ satisfying $\cos(X_i , Y) \geq \cos(X_i+1 , Y)$.
\end{restatable}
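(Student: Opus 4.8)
The plan is to reduce the chain inequality $\mathbb{E}[\tau(X,Y)] \geq \mathbb{E}[\tau(X',Y)]$ to a single-step claim: for any pair of consecutive vectors $X_i$ and $X_{i+1}$ in the sequence $\mathcal{S}$ (related by operation (i) or (ii)), one has $\mathbb{E}[\tau(X_i,Y)] \geq \mathbb{E}[\tau(X_{i+1},Y)]$. Since Kendall's $\tau$ is linear in the sum over pairs $i<j$ and the expectation commutes with that finite sum, it suffices to control, coordinate-pair by coordinate-pair, the quantity $\mathbb{E}[\sgn((X_i)_p - (X_i)_q)\sgn(y_p - y_q)]$. Telescoping along $\mathcal{S}$ then yields the full statement. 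The hypothesis $\cos(X_i, Y) \geq \cos(X_{i+1}, Y)$ is what ties each local step to the direction of $Y$, so the heart of the argument is to show that the local $\tau$-expectation decrease is \emph{governed by} (not just correlated with) the cosine decrease.

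First I would handle operation (i), swapping two entries, say positions $p$ and $q$. Swapping $(X_i)_p$ and $(X_i)_q$ changes only the term for the pair $\{p,q\}$ in the $\tau$ sum (all other pairs involving $p$ or $q$ see the same multiset of values, just relabeled, and since $Y$ is exchangeable-in-distribution? — no, $Y$ need not be exchangeable, so I must be careful). Actually the cleaner route: fix a realization of $Y$, and compare $\tau(X_i, Y)$ with $\tau(X_{i+1},Y)$ directly. Under a swap of coordinates $p,q$, only pairs $\{p, r\}$ and $\{q, r\}$ for the single $r$-values, plus $\{p,q\}$, can change; summing the differences and taking expectation over $Y$, I would show the net change is nonpositive precisely because the swap that is ``applied'' is one that the hypothesis certifies does not increase alignment with $Y$ — i.e., I use $\cos(X_i,Y)\geq\cos(X_{i+1},Y)$, which for fixed-nonnegative-norm-type arguments translates into an inequality on $\langle X_i, Y\rangle$ versus $\langle X_{i+1},Y\rangle$ (a swap preserves $\|X\|$, so the cosine ordering is exactly an inner-product ordering here). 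The key lemma I expect to need is a monotone-coupling statement: if $\langle X_i, Y\rangle \geq \langle X_{i+1}, Y\rangle$ for all $Y$ in the relevant event and the two vectors differ by a transposition, then the expected concordance count with a positive random vector also drops — this should follow from a rearrangement-inequality / FKG-style argument exploiting that $y_i > 0$ and the distribution puts mass on the region where the cosine ordering holds.

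For operation (ii), scaling one entry $(X_i)_p$ by $\alpha \in (0,1]$: this can flip the sign of $(X_i)_p - (X_i)_r$ for those $r$ with $(X_i)_r$ between $\alpha (X_i)_p$ and $(X_i)_p$, and only in the direction of making $(X_{i+1})_p$ smaller relative to its neighbors. I would argue that each such flip, in expectation over $Y$ restricted to the event where the cosine decreased, contributes nonpositively to $\Delta\tau$, again because shrinking a coordinate that was ``too aligned'' with the large coordinates of $Y$ is exactly what reduces the cosine; the nonnegativity $x_i, x_i' \geq 0$ and positivity of $y_i$ are used to ensure scaling down never creates a spurious new concordance that outweighs the destroyed ones. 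Combining (i) and (ii), every step of $\mathcal{S}$ is $\tau$-expectation-nonincreasing, and telescoping gives the theorem.

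\textbf{Main obstacle.} The delicate point is making rigorous the passage from ``$\cos(X_i,Y)$ decreased'' to ``expected concordance decreased'' at a single step — cosine and Kendall's $\tau$ are genuinely different functionals, and the theorem only claims \emph{positive correlation}, so I should not expect a clean pointwise identity. I anticipate the real work is in setting up the right coupling/conditioning (the expectation is explicitly taken over the event $\{\cos(X_i,Y)\geq\cos(X_{i+1},Y)\}$), and in invoking a rearrangement inequality for the swap case and a sign-counting argument for the scaling case that both land on the same side. I would also need to confirm that restricting the expectation to that event does not break linearity of $\tau$ over coordinate pairs — it does not, since the event depends only on $Y$, not on the pair index — so the decomposition survives. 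If a fully general rearrangement argument proves too strong, a fallback is to assume the $y_i$ are i.i.d.\ (exchangeability), under which the swap case becomes an immediate application of the classical rearrangement inequality in expectation.
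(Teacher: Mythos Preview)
Your outline is structurally identical to the paper's proof: reduce to a single step of $\mathcal{S}$, handle operations (i) and (ii) separately, and telescope. You also land on the key observation for (i) that a swap preserves $\|X\|$, so the cosine condition becomes an inner-product condition.

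Where the paper differs from your sketch is in execution, not strategy. It does no rearrangement or FKG machinery at all; instead it computes the $\Omega$-difference explicitly. For the swap of positions $p<q$ (taking $X$ WLOG descending, so $x_p>x_q$), the cosine hypothesis collapses to $(x_p-x_q)(y_p-y_q)>0$, i.e.\ simply $y_p>y_q$. The paper then writes
\[
\Omega(X,Y)-\Omega(X',Y)=2+2\sum_{p<i<q}\bigl(\sgn(y_p-y_i)+\sgn(y_i-y_q)\bigr)
\]
and asserts its conditional expectation given $y_p>y_q$ is nonnegative. For (ii) it treats only a one-rank drop ($x_2>\alpha x_1>x_3$), gets $\Omega(X,Y)-\Omega(X',Y)=2\,\sgn(y_1-y_2)$, shows the cosine condition forces $y_1\geq Ky_2$ for some $K>0$, and concludes $\mathbb{E}[\sgn(y_1-y_2)\mid y_1\geq Ky_2]\geq 0$ by a short case split on $K$. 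So the ``delicate point'' you flag is handled by bare sign-counting plus conditioning, not by any abstract inequality; your proposed tools are heavier than needed. Your instinct about a hidden exchangeability assumption is, however, on target: the paper's claim that the conditional expectation of the sum above is nonnegative is stated without justification and is exactly where an i.i.d.\ (or at least exchangeable) hypothesis on the $y_i$ is implicitly used.
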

The full proof and discussions can be found in Appendix~\ref{apx:math_detail}. In the scenario of attribution robustness, under the above assumption, we denote $Y$ as the natural attribution $g(\bm{x})$, and $X$ and $X'$ as two perturbed attributions $g(\bm{x}')$ and $g(\bm{x}'')$. If the perturbed attribution has a greater cosine similarity with natural attribution, then their expected Kendall's rank correlation is also greater. Explicitly speaking, if $\cos(g(\bm{x}'), g(\bm{x}))\geq \cos(g(\bm{x}''), g(\bm{x}))$, then $\mathbb{E}\left[\tau(g(\bm{x}'), g(\bm{x}))\right]\geq \mathbb{E}\left[\tau(g(\bm{x}''), g(\bm{x}))\right]$. This theorem provides a theoretical foundation that supports using cosine similarity for attribution protection because it directly links to Kendall's rank correlation.

\section{Characterization of geometric perspective on attributions}\label{sec:geometric}
In the last section, we have indicated the relationship between cosine similarity and Kendall's rank correlation. In this section, we use this relationship to explain (i) the drawbacks of attribution protections based on $\ell_p$-norm, \emph{e.g.}, $\min_\theta\Vert g(\bm{x}) - g(\tilde{\bm{x}})\Vert_p$ in \citet{chen2019robust}, where $\bm{x}$ is a natural sample and $\tilde{\bm{x}}$ is a perturbed sample; (ii) the inappropriateness of standard adversarial training for attribution protection and (iii) the limitation of the cosine similarity, \emph{i.e.}, $\min_\theta \left(1-\cos(g(\bm{x}), g(\tilde{\bm{x}}))\right)$ for standard adversarial protection. In this discussion, $g(\bm{x})$ and $g(\tilde{\bm{x}})$ are considered as vectors, and as stated in Theorem~\ref{theo:kends}, a smaller angle between them implies higher attribution robustness. The attribution method $g$ is assumed to fulfill the axiom of completeness\footnote{Without loss of generality, we assume $f_y(\bm{a})=0$ and use $\ell_2$-norm as the illustration.}, \emph{i.e.}, $f_y(\bm{x})-f_y(\bm{a}) = \sum_i g(\bm{x})_i$. If $g(\vx)_i\geq 0$ for all $i$, $\Vert g(\bm{x})\Vert_2 = \sqrt{\sum_ig(\bm{x})_i^2}\leq\sum_i g(\bm{x})_i = f_y(\bm{x})$. In other words, $\Vert g(\vx)\Vert_2$ is the lower bound of $f_y(\vx)$ and larger $\Vert g(\vx)\Vert_2$ implies higher classification accuracy. Thus, minimizing the angle between $g(\vx)$ and $g(\tilde{\vx})$ and maximizing their magnitudes would respectively enhance their attributional and adversarial robustness.

\begin{figure}
    \centering
    \begin{subfigure}[b]{.4\textwidth}
        \centering
        \includegraphics[width=\textwidth]{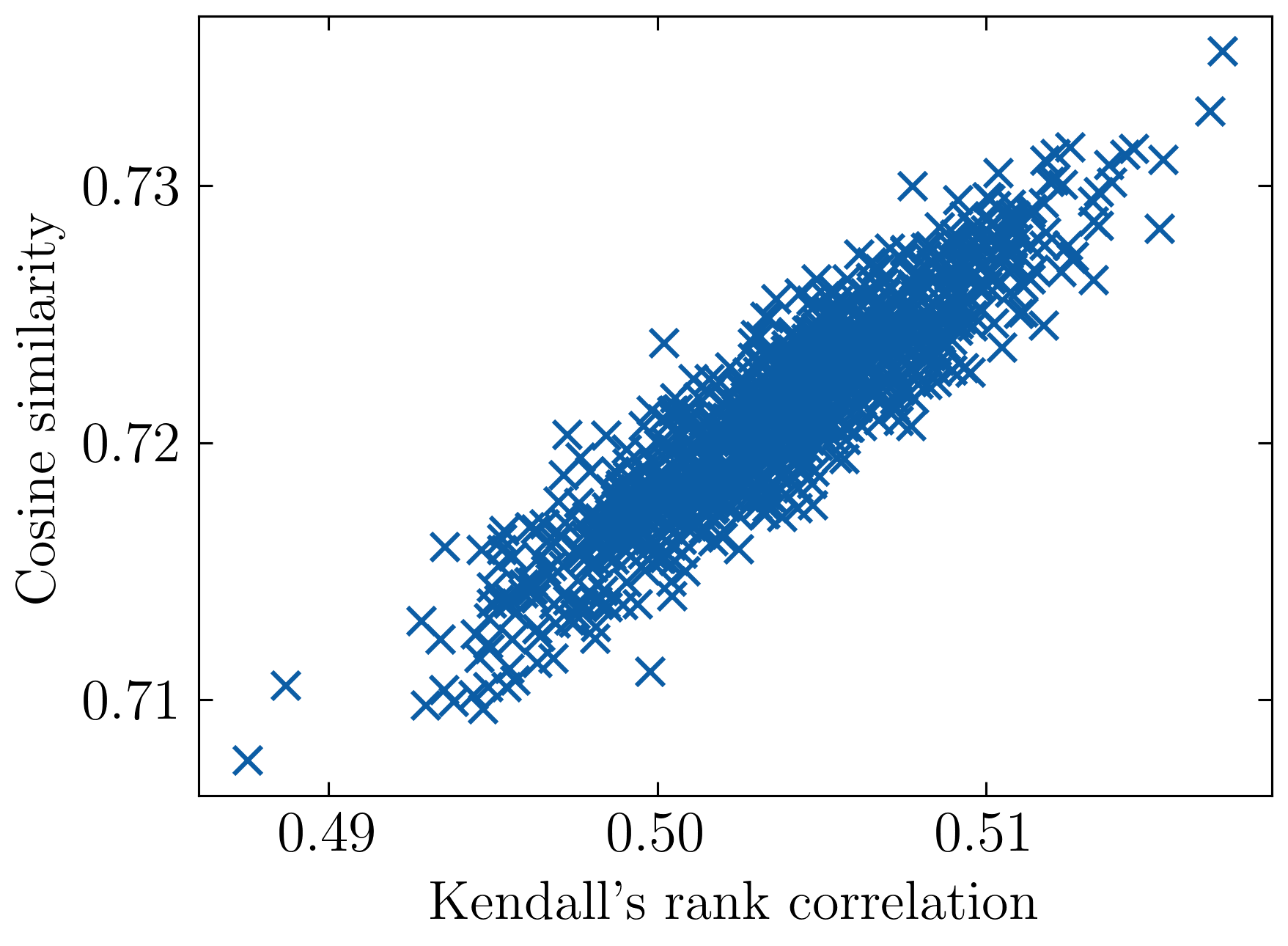}
        \caption{}\label{fig:kendall}
        \label{fig:kendall_cos}
    \end{subfigure}
    \hspace*{2cm}
    \begin{subfigure}[b]{.4\textwidth}
        \centering
        \includegraphics[width=\textwidth]{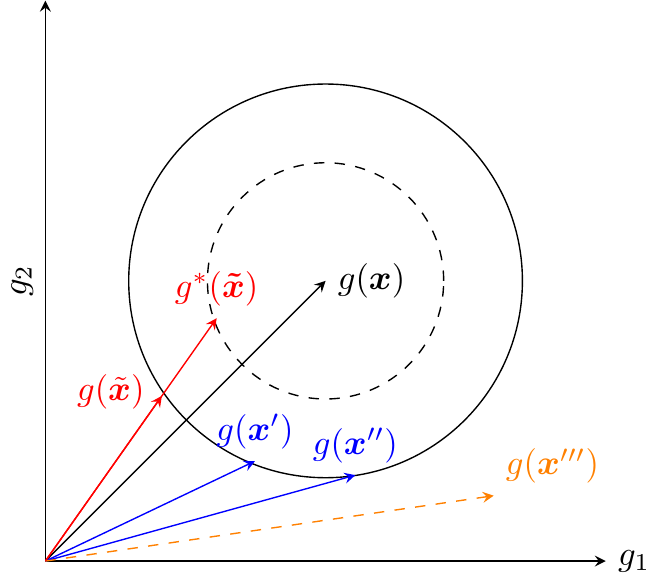}
        \caption{}\label{fig:illustrate}
    \end{subfigure}
    \caption{(a) Visualization of Kendall's rank correlation and cosine similarity using simulated data. Given a fixed vector $\bm{u}$ with dimension of 10,000, one thousand random vectors $\bm{v}_i$ are sampled and their corresponding $\tau$ and $\cos$ with $\bm{y}$ are calculated and plotted. The positive correlation can be observed and is later proved by Theorem~\ref{theo:kends}. (b) 2D illustration of comparison of attribution trained by $\ell_p$-norm and cosine similarity. The axises are two dimensions of attribution. The solid ball and dashed ball represent two networks. Solid ball represents the untrained attribution surface $g$ and dashed ball is the trained surface $g^\ast$.}
\end{figure}

Fig.~\ref{fig:illustrate} shows a two-dimensional projection for the ease of illustration, where each 2D point represents an attribution of an input. Higher-dimensional cases can be extended in a similar manner. In Fig.~\ref{fig:illustrate}, $g(\bm{x})$ is the original attribution of $\bm{x}$ and the others are its perturbed counterparts. Given two attributions, $g(\bm{x}')$ and $g(\bm{x}'')$, where $\Vert g(\bm{x})-g(\bm{x}')\Vert = \Vert g(\bm{x}) - g(\bm{x}'')\Vert$ but $\cos(g(\bm{x}), g(\bm{x}')) > \cos(g(\bm{x}), g(\bm{x}''))$, according to Theorem~\ref{theo:kends}, $\tau(g(\bm{x}), g(\bm{x}'))$ is likely larger than $\tau(g(\bm{x}), g(\bm{x}''))$, implying that the attribution of $\bm{x}'$ is likely closer to the attribution of $\bm{x}$ than that of $\bm{x}''$. It explains the results in Fig.~\ref{fig:motivation} and point (i), \emph{i.e.}, drawbacks of attribution protection based on $\ell_p$-norm.

The standard adversarial training maximizes $f_y(\tilde{\bm{x}})$, where $\tilde{\bm{x}}$ is an adversarial example. In Fig.~\ref{fig:illustrate}, $\bm{x}'''$ has large $\Vert g(\bm{x}''')\Vert_2$, implying that $f_y(\bm{x}''')$ is also large. In other words, the classification label of $\bm{x}'''$ is well protected. %
However, standard adversarial training does not explicitly minimize the angle between $g(\bm{x})$ and $g(\bm{x}''')$. It implies that $\tau(g(\bm{x}), g(\bm{x}'''))$ can be small and $\bm{x}'''$ can attack the attribution successfully. It should be mentioned that adversarial training does improve attribution robustness because it smooths the decision surface \citep{wang2020smoothed} although it is not the most ideal one. It explains the point (ii).

Point (iii) can be explained similarly. Since the cosine similarity, or $\min_\theta \left(1-\cos(g(\bm{x}), g(\tilde{\bm{x}})\right)$, does not necessarily enlarge the magnitude of $g(\tilde{x})$, it cannot improve network robustness against standard adversarial attack. Fig.~\ref{fig:illustrate} shows two networks (the dashed circle and solid circle). $\cos(g(\bm{x}), g(\tilde{\bm{x}}))= \cos(g(\bm{x}), g^\ast(\tilde{\bm{x}}))$, which implies that the two networks perform the same on attribution protection, but  $\Vert g^\ast(\tilde{\bm{x}})\Vert > \Vert g(\tilde{\bm{x}})\Vert$, meaning $g^\ast$ is more robust against the standard adversarial attack from $\tilde{\bm{x}}$, while $g$ is more vulnerable. 

To protect against both attribution attack and adversarial attack, in the following section, the proposed IGR is optimized with adversarial loss together, where the former minimizes the angle between attribution vectors to perform attribution protection, and the latter maximizes their magnitude to offer standard adversarial protection.

\section{Integrated gradients regularizer (IGR)}\label{sec:igr}
Based on the above analysis, in this section, we introduce the integrated gradients regularizer (IGR), which regularizes the cosine similarity between natural and perturbed attributions.
\subsection{IGR robust training objective}
Since Kendall's rank correlation and cosine similarity are positively related, we suggest to improve attribution robustness, especially integrated gradients, by maximizing the cosine similarity between natural and perturbed attributions, or equivalently, minimizing $1-\cos(\textmd{IG}(\bm{x}), \textmd{IG}(\tilde{\bm{x}}))$. Therefore, we propose the following training objective function incorporating the IGR
\begin{equation}
    \mathcal{L}_{igr} = \mathbb{E}_{\mathcal{D}}[\mathcal{L}(\tilde{\bm{x}}, y;\bm{\theta}) + \lambda\left(1-\cos(\textmd{IG}(\bm{x}), \textmd{IG}(\tilde{\bm{x}})\right)],\label{eqn:obj}
\end{equation}
where $\mathcal{L}$ is a standard loss function used in robust training, and $\lambda$ is a hyper-parameter. We will later show in Section \ref{sec:exp} that $\mathcal{L}$ can be chosen from existing loss function in robust training, and our IGR will further improve the robustness upon baseline methods. 

In practice, the integral inside IG definition can be numerically computed by its Riemann sum, and IG is approximated by
\begin{equation}
\hat{\textmd{IG}}(\bm{x})_i = x_i \times \frac{1}{m}\sum_{k=1}^m \frac{\partial f_y(\frac{k}{m}\bm{x})}{\partial x_i}.\label{eqn:ig_approx}
\end{equation}
Similar to adversarial training, optimizing the above objective function in Eq.~(\ref{eqn:obj}) requires perturbed example $\tilde{\bm{x}}$ that maximally diverts its IG from the original counterpart. Such examples can be found by maximizing the proposed regularizer within its $\ell_p$-ball with radius $\varepsilon$, \emph{i.e.},
\begin{equation}
\tilde{\bm{x}} = \argmax_{\tilde{\bm{x}}\in\mathcal{B}_\varepsilon(\bm{x})} \left(1-\cos(\textmd{IG}(\bm{x}), \textmd{IG}(\tilde{\bm{x}}))\right)\label{eqn:gen_adv}.
\end{equation}
It is noticed that computing the adversarial loss $\mathcal{L}(\tilde{\bm{x}}, y;\bm{\theta})$ itself relies on $\tilde{\bm{x}}$, which can be obtained from adversarial attacks, \emph{i.e.}, $\tilde{\bm{x}} = \argmax_{\tilde{\bm{x}}}\mathcal{L}(\tilde{\bm{x}}, y;\bm{\theta})$. Thus, here we reuse these $\tilde{\bm{x}}$ in IGR to avoid repeatedly using gradient descent methods to find the optimum in Eq.~({\ref{eqn:gen_adv}) and speed up the training. For example, if $\mathcal{L}$ is the standard adversarial training loss function, we directly use the examples generated from PGD attack. The computation cost of IGR is similar to previous proposed methods.

The use of Pearson's correlation regularizer in \citet{ivankay2020far} as the replacement of Kendall's rank correlation is the closest method to ours. \citet{ivankay2020far} suggests that Pearson's correlation regularizer keeps the ranking of feature constant. However, the statement is not supported by any theoretical justification while we give a theorem that shows cosine similarity is positively related to Kendall's rank correlation. Besides, the Pearson's correlation is an unstable metric for attributions with small variances. For a fixed vector, two slightly different inputs $\bm{\delta}$ can have drastically different Pearson's correlation, which easily fluctuate from $-1$ to $1$. The detailed discussion can be found in Appendix~\ref{apx:pearson}.

\subsection{IGR induces more consistent neuron activation states}\label{sec:activation}
An interesting discovery about IGR is related to neuron activations. We found that the activation functions in ReLU networks trained with IGR are more often with the same neuron activation states for natural sample and corresponding perturbed sample. For deep networks with ReLU activations, if the pre-ReLU value is positive (negative) for natural sample, the probability of pre-ReLU value being positive (negative) for corresponding perturbed sample would increase when trained with IGR. To analyze this phenomenon, a single-layer neural network with ReLU activation is studied. The results from this single-layer neural network can be extended to deep networks by stacking multiple layers.

Recall that $\bm{x}\in\mathbb{R}^d$ is an input image, and the network function $f$ is parameterized by $(\bm{W}, \bm{u}, c)\in\mathbb{R}^{d\times m} \times \mathbb{R}^m \times \mathbb{R}$, where $\bm{W}_i$ is the column vector of $\bm{W}$, $w_{ij}$ is the $ij$-th entry of matrix $\bm{W}$ and $u_i$ is the $i$-th entry of vector $\bm{u}$, \emph{i.e.}, $f(\bm{x}) =  \bm{u}^\top\text{ReLU}(\bm{W}^\top \bm{x})+c$. Then, the following proposition holds.

\begin{restatable}{proposition}{propactivation}\label{prop:activation}
Given a single-layer neural network with ReLU activation, and with the above parameterization, if, for all $i$, $\bm{W}_i$ and $u_i$ are all independent and identically distributed random variables following Gaussian distributions, i.e., $\bm{W}_i\stackrel{i.i.d.}{\sim}\mathcal{N}(0, \sigma_w^2 I_d)$ and $u_i\stackrel{i.i.d.}{\sim}\mathcal{N}(0, \sigma_u^2)$, and two input images that each has small variance, $\bm{x}$ and $\tilde{\bm{x}}$, then
\begin{equation}
    \cos(\textmd{IG}(\bm{x}), \textmd{IG}(\tilde{\bm{x}}))\approx \frac{\mathbb{P}(W^\top\bm{x}>0\cap W^\top\tilde{\bm{x}}>0)}{\sqrt{\mathbb{P}(W^\top\bm{x}>0)\mathbb{P}(W^\top\tilde{\bm{x}}>0)}}.\label{eqn:activation}
\end{equation}
\end{restatable}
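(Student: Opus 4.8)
The plan is to first write down a closed form for the integrated gradients of this architecture, then use the small-variance hypothesis to reduce the cosine similarity to a purely geometric quantity depending only on the hidden weights, and finally evaluate that quantity using the rotational invariance of the Gaussian weights together with a concentration argument over the $m$ hidden units.

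\textbf{Step 1 (closed form for IG).} Since $\text{ReLU}(\bm W_j^\top(\alpha\bm x)) = \alpha\,\text{ReLU}(\bm W_j^\top\bm x)$ for $\alpha\ge 0$, the activation pattern $\{\mathbbm{1}[\bm W_j^\top\bm x>0]\}_j$ is constant along the ray $\{\alpha\bm x:\alpha\in(0,1]\}$, so $\partial f(\alpha\bm x)/\partial x_i$ does not depend on $\alpha$. Hence both the exact integral in Eq.~(\ref{eqn:ig}) and its Riemann approximation in Eq.~(\ref{eqn:ig_approx}) collapse to $\textmd{IG}(\bm x)_i = x_i\sum_{j=1}^m u_j\,\mathbbm{1}[\bm W_j^\top\bm x>0]\,w_{ij}$. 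Writing $\bm v := \sum_j u_j\mathbbm{1}[\bm W_j^\top\bm x>0]\bm W_j$ and $\tilde{\bm v}$ analogously for $\tilde{\bm x}$, this reads $\textmd{IG}(\bm x)=\bm x\odot\bm v$ and $\textmd{IG}(\tilde{\bm x})=\tilde{\bm x}\odot\tilde{\bm v}$.

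\textbf{Step 2 (reduce to $\cos(\bm v,\tilde{\bm v})$).} Then $\langle\textmd{IG}(\bm x),\textmd{IG}(\tilde{\bm x})\rangle=\sum_i x_i\tilde x_i v_i\tilde v_i$, $\|\textmd{IG}(\bm x)\|^2=\sum_i x_i^2 v_i^2$, and similarly for $\tilde{\bm x}$. Because $\bm x$ and $\tilde{\bm x}$ have small coordinate variance and nonnegative entries (consistent with the black baseline), $x_i\tilde x_i$, $x_i^2$, $\tilde x_i^2$ are each nearly constant in $i$, so they factor out of all three sums up to lower-order terms, with matching signs; hence $\cos(\textmd{IG}(\bm x),\textmd{IG}(\tilde{\bm x}))\approx\cos(\bm v,\tilde{\bm v})$.

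\textbf{Step 3 (evaluate $\cos(\bm v,\tilde{\bm v})$).} Expand $\langle\bm v,\tilde{\bm v}\rangle=\sum_{j,l}u_ju_l\,\mathbbm{1}[\bm W_j^\top\bm x>0]\,\mathbbm{1}[\bm W_l^\top\tilde{\bm x}>0]\,\bm W_j^\top\bm W_l$. The off-diagonal terms $j\ne l$ have mean zero since the $u_j$ are independent and centered; a second-moment estimate shows their total fluctuation is a factor $\Theta(1/\sqrt d)$ smaller than the diagonal part, hence negligible for high-dimensional inputs. The diagonal part $\sum_j u_j^2\,\mathbbm{1}[\bm W_j^\top\bm x>0]\,\mathbbm{1}[\bm W_j^\top\tilde{\bm x}>0]\,\|\bm W_j\|^2$ is a sum of $m$ i.i.d.\ terms, so it concentrates at $m$ times its mean; using the independence of $u_j$ from $\bm W_j$ and, crucially, the independence of the \emph{direction} of $\bm W_j$ (on which the indicators depend) from its \emph{norm} for an isotropic Gaussian, that mean factors as $\sigma_u^2\cdot\mathbb{P}(W^\top\bm x>0\cap W^\top\tilde{\bm x}>0)\cdot d\sigma_w^2$. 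The same argument gives $\|\bm v\|^2\approx m\sigma_u^2 d\sigma_w^2\,\mathbb{P}(W^\top\bm x>0)$ and likewise for $\tilde{\bm v}$. Substituting into the cosine, the common factor $m\sigma_u^2 d\sigma_w^2$ cancels and yields Eq.~(\ref{eqn:activation}).

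The main obstacle is Step 3's control of the off-diagonal mass: one must verify that summing $\Theta(m^2)$ centered cross-terms produces a fluctuation of strictly smaller order than the $\Theta(m)$ diagonal mean, which requires both the width $m$ and the input dimension $d$ to be large and the wedge probability $\mathbb{P}(W^\top\bm x>0\cap W^\top\tilde{\bm x}>0)$ bounded away from $0$. A secondary point of care is Step 2: the small-variance reduction is only exact in a limit, so the conclusion is naturally an approximate identity, which is why the proposition is phrased with $\approx$.
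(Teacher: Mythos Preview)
Your proposal is correct and follows essentially the same route as the paper: derive the closed-form IG via positive homogeneity of the ReLU network along the ray, invoke the small-variance assumption to strip out the coordinate factors, drop the $j\ne l$ cross-terms because $\bm W_j^\top\bm W_l$ concentrates near zero in high dimension, and then replace the diagonal sum by its expectation for large $m$. Your explicit appeal to the independence of the norm and direction of an isotropic Gaussian to factor $\mathbb{E}\big[\|W\|^2\,\mathbbm{1}_{W^\top\bm x>0}\,\mathbbm{1}_{W^\top\tilde{\bm x}>0}\big]$ is a point the paper uses tacitly but does not spell out, and your caveats about needing both $m$ and $d$ large and the wedge probability bounded away from zero are appropriate refinements of the paper's informal ``$\approx$''.
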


The proof can be found in Appendix~\ref{apx:proof_act}. The right-hand side of Eq.~(\ref{eqn:activation}) is called the \emph{activation consistency} of natural and perturbed samples. For the sake of convenience, let the event $W^\top\bm{x}>0$ be $A$ and $W^\top\tilde{\bm{x}}>0$ be $B$. The right-hand side of Eq.~(\ref{eqn:activation}) can be rewritten as $\mathbb{P}(A\cap B)/\sqrt{\mathbb{P}(A)\mathbb{P}(B)}$. Since $\mathbb{P}(A\cap B)$ has the upper bound that $\mathbb{P}(A\cap B)\leq\min\left\{\mathbb{P}(A), \mathbb{P}(B)\right\}$, it is obvious that
\begin{equation}
    \frac{\mathbb{P}(A\cap B)}{\sqrt{\mathbb{P}(A)\mathbb{P}(B)}}\leq\frac{\mathbb{P}(A\cap B)}{\sqrt{\mathbb{P}(A\cap B)\mathbb{P}(A\cap B)}} = 1,
\end{equation}
and the equality holds when event $A$ happens with the same probability as event $B$, \emph{i.e.}, $\mathbb{P}(A) = \mathbb{P}(B) = \mathbb{P}(A\cap B)$; alternatively, the equality can hold when $\mathbb{P}(A^{\mathsf{c}}) = \mathbb{P}(B^{\mathsf{c}}) = \mathbb{P}(A^\mathsf{c}\cup B^\mathsf{c})$. In other words, maximizing Eq.~(\ref{eqn:activation}) encourages that the network activates the same set of neurons for $\bm{x}$ and $\tilde{\bm{x}}$, or deactivates the same set of neurons for $\bm{x}$ and $\tilde{\bm{x}}$.

\section{Experiments and results}\label{sec:exp}
\subsection{Experimental configurations}
We evaluate the performance of IGR on different datasets, including MNIST \citep{lecun2010mnist}, Fashion-MNIST \citep{xiao2017fashion} and CIFAR-10 \citep{krizhevsky2009learning}. For MNIST and Fashion-MNIST, we use a network consisting of four convolutional layers followed by three fully connected layers. The model is trained by Adam Optimizer \citep{kingma2014adam} with learning rate $10^{-4}$ for 90 epochs. For CIFAR-10, we train a ResNet-18 \citep{he2016deep} for 120 epochs using SGD \citep{sutskever2013importance} with initial learning rate 0.1, momentum 0.9 and weight decay $5\times 10^{-4}$. The learning rate decays by $0.1$ at the 75th and 90th epoch. All the experiments are run on NVIDIA GeForce RTX 3090\footnote{We will release the code after the paper is accepted.}.

As discussed in Section~\ref{sec:igr}, IGR is applied with state-of-the-art adversarial training methods: standard adversarial training (AT)\citep{madry2018towards}, TRADES \citep{zhang2019theoretically} and MART \citep{wang2019improving}. $\mathcal{L}_{at}$, $\mathcal{L}_{trades}$ and $\mathcal{L}_{mart}$ in Table~\ref{tbl:loss_func} are the objective functions of these methods, and are regarded as $\mathcal{L}$ in Eq.~(\ref{eqn:obj}). In Table \ref{tbl:loss_func}, $\textmd{CE}$ denotes the cross-entropy loss and $\textmd{KL}$ denotes the KL-divergence. $\textmd{BCE}$ is a boosted cross-entropy (see details in \citet{wang2019improving}). Note that both AT and MART generate adversarial examples by maximizing the CE loss, while TRADES maximizes the KL-divergence regularizer. Following the baseline methods, we directly leverage the perturbed examples generated by their original techniques to compute the integrated gradients, as well as the IGR, instead of generating our own ones using Eq.~(\ref{eqn:gen_adv}). Moreover, to ensure fair comparisons, we keep the hyper-parameters the same for models with or without IGR.

\begin{table}
    \caption{A summary of loss functions used in AT, TRADES and MART, and added with IGR}\label{tbl:loss_func}
    \centering
    \begin{tabular}{ll}
        \toprule
        Model  &Loss function\\
        \midrule
        AT ($\mathcal{L}_{at}$) & $\textmd{CE}(f(\tilde{\bm{x}}), y)$\\
        TRADES ($\mathcal{L}_{trades}$) &$\textmd{CE}(f(\tilde{\bm{x}}),y) + \beta\textmd{KL}(f(\bm{x})\|f(\tilde{\bm{x}}))$\\
        MART ($\mathcal{L}_{mart}$) &$\textmd{BCE}(f(\tilde{\bm{x}}),y)$\\
        & $+ \beta\textmd{KL}(f(\bm{x})\|f(\tilde{\bm{x}}))(1-f_y(\bm{x}))$\\
        \midrule
        +IGR & $+\lambda\left(1-\cos(\textmd{IG}(\bm{x}), \textmd{IG}(\tilde{\bm{x}})\right)$\\
        \bottomrule
    
    \end{tabular}
\end{table}

\subsection{Evaluation on attribution robustness}
\begin{table}
    \caption{Attribution robustness of models trained by different defense methods under IFIA (top-k).}
    \label{tbl:comp-attr-rob}
    \centering
    \begin{tabular}{lcc|cc|cc}
      \toprule
      & \multicolumn{2}{c}{MNIST}&\multicolumn{2}{c}{Fashion-MNIST}&\multicolumn{2}{c}{CIFAR-10}                   \\
      \cmidrule(r){2-3} \cmidrule(r){4-5} \cmidrule(r){6-7}
      Model     & top-k     & Kendall & top-k     & Kendall& top-k     & Kendall\\
      \midrule
      Standard &32.21\% &0.0955 &42.83\% &0.1884 &46.71\% &0.1662 \\
      IG-NORM \citep{chen2019robust} &36.13\%  &0.1562    &51.84\%&0.3446 &74.49\% &0.5811\\
      IG-SUM-NORM \citep{chen2019robust}&41.53\%&0.2240&57.27\%&0.4097&78.70\%&0.6901\\
      AdvAAT \citep{ivankay2020far} &51.74\%&0.3791&73.62\%&0.5810&72.11\%&0.5484\\
      ART \citep{singh2020attributional}&30.38\%&0.1439&31.71\%&0.2079&70.44\%&0.6875\\
      SSR \citep{wang2020smoothed} &38.77\%&0.1650&60.40\%&0.4321&71.20\%&0.5498\\
      \midrule
      AT \citep{madry2018towards} &34.35\%  & 0.1846    &32.00\%&0.1516 &72.21\%&0.5578 \\
      AT+IGR&33.40\%  &0.1582    &53.36\%&0.3750 &73.37\%& 0.5775 \\
      \midrule
      TRADES \citep{zhang2019theoretically}&36.37\%  &0.2127   &57.01\%&0.2582 &78.28\%&0.6903 \\
      TRADES+AdvAAT &52.04\%  &0.4315 & 79.15\% &0.5794&71.30\% &0.5239\\
      TRADES+IGR&\textbf{56.13\%}  &\textbf{0.4537}    &\textbf{80.62\%}&\textbf{0.6565} &\textbf{80.26}\%&\textbf{0.6940} \\
      \midrule
      MART \citep{wang2019improving}&32.50\%  &0.1261    &58.57\%&0.4262 &76.11\%& 0.6192 \\
      MART+IGR&37.34\%  &0.1854    &57.97\%&0.4317 &76.56\%&0.6328 \\
      \bottomrule
    \end{tabular}
\end{table}

To evaluate our method under attribution attack, the iterative feature importance attacks (IFIA) using top-k intersection as dissimilarity function (\emph{top-k}) \citep{ghorbani2019interpretation} %
is adapted. IFIA generates perturbations by iteratively maximizing the dissimilarity function that measures the changes between attributions of images, while keeps the classification results unchanged. %
In this experiment, we perform 200-step IFIA as in \citet{chen2019robust}. %
For MNIST and Fashion-MNIST, we choose $k=100$ and the perturbation size $\varepsilon=0.3$. For CIFAR-10, $k=1000$ and $\varepsilon=8/255$. Two metrics are chosen to evaluate the performance under attribution attack as in \citet{chen2019robust}: top-k intersection and Kendall's rank correlation, where top-k intersection counts the proportion of pixels that coincide in the $k$ most important features. Each sample is attacked five times and the mean metrics are reported. For both metrics, a higher number indicates that the model is more robust under the attack. 

To compare, attribution protection methods, IG-NORM and IG-SUM-NORM by \citet{chen2019robust}, %
\emph{Smooth Surface Regularization (SSR)} \citep{wang2020smoothed}, \emph{Attributional Robustness Training (ART)} \citep{singh2020attributional} and \emph{Adversarial Attributional Training} with robust training loss (\emph{AdvAAT}), are implemented and evaluated on all the datasets. A cross-entropy loss trained natural model (\emph{standard}) is also included as a baseline. The details of these baseline methods are briefly introduced in Appendix~\ref{apx:baseline}.

From the results in Table~\ref{tbl:comp-attr-rob}, we observed the following phenomenons. (i) Compared with baseline methods (AT, TRADES and MART), models trained with IGR outperform their corresponding counterparts in terms of both top-k intersection and Kendall's rank correlation. (ii) Adversarial defense methods themselves also help the attribution protection, especially improve on Fashion-MNIST and CIFAR-10 comparing with the standard cross-entropy training. (iii) Compared with other attribution protection methods, standard adversarial defense methods, including AT, TRADES and MART, are weaker in attribution robustness; however, they achieve comparable or even stronger attribution protections when training with IGR. (iv) TRADES itself has the best attribution protections among models without IGR, and IGR provides the most significant boost on TRADES. (v) Since AdvAAT uses Pearson's correlation as a regularizer, which is close to IGR, and TRADES+IGR outperforms the other baselines, we apply Pearson's correlation on TRADES, \emph{i.e.}, TRADES +AdvAAT, in the Table~\ref{tbl:comp-attr-rob} for the comparison. Table~\ref{tbl:comp-attr-rob} shows clearly that TRADES+AdvAAT does not always improve over TRADES and is consistently outperformed by TRADES+IGR.

A visualization of attribution robustness is also presented in Figure \ref{fig:attribution}. It is observed that the attribution of the baseline model is easily corrupted. For model trained with IGR, although the magnitudes of IG are different from IG of the original images, the directions remain nearly identical, which is also aligned with human visual perceptions.
\begin{figure}
    \centering
    \includegraphics[width=\textwidth]{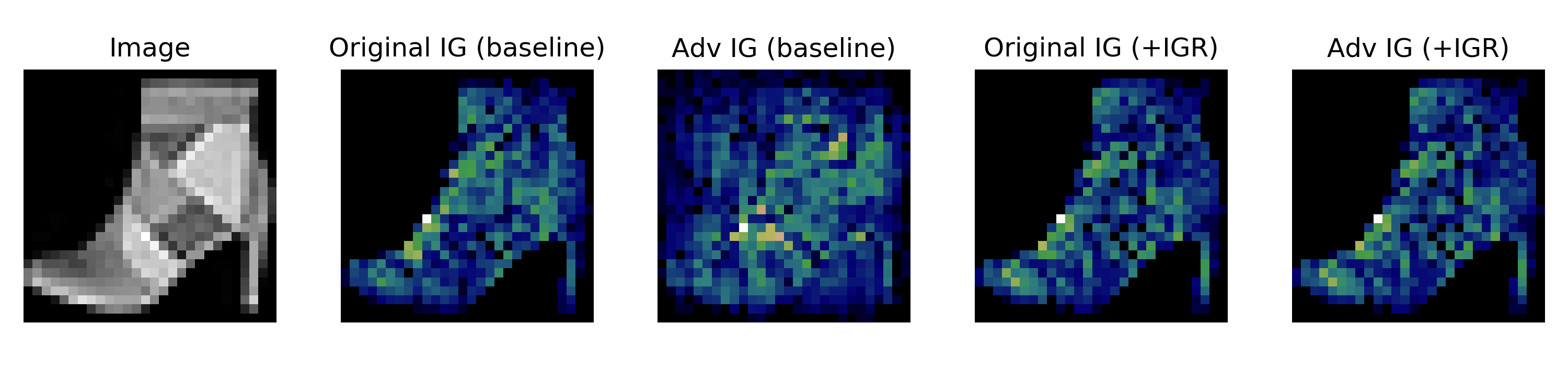}
    \caption{IGR improves attribution robustness. The second and third images are the IG of the original and perturbed image on the baseline model. The last two images are respectively the IG of the original image and perturbed image from the model trained with IGR. Both baseline and baseline+IGR models make the correct classifications, while only baseline+IGR protects the attribution of perturbed image. More visualization results are given in Appendix~\ref{apx:exp}}\label{fig:attribution}
\end{figure}
\begin{table}
    \centering
    \caption{Adversarial accuracy (\%) of CNN trained by different defense methods on MNIST, Fashion-MNIST and CIFAR-10.}\label{tbl:adv_rob}
    \resizebox{\textwidth}{!}{%
    \begin{tabular}{lcccc|cccc|cccc}
      \toprule
      & \multicolumn{4}{c}{MNIST}&\multicolumn{4}{c}{Fashion-MNIST}&\multicolumn{4}{c}{CIFAR-10}\\
      \cmidrule(r){2-5} \cmidrule(r){6-9} \cmidrule(r){10-13}
      Model     & Natural     & FGSM& PGD20 &CW$_\infty$& Natural     & FGSM& PGD20 &CW$_\infty$& Natural     & FGSM& PGD20 &CW$_\infty$ \\
      \midrule
      AT        &99.43  &99.39 &99.25&99.24 &\textbf{89.75}&78.92 &74.69&74.65&73.09&69.42&37.56&45.28\\
      +IGR    &\textbf{99.51}  &\textbf{99.45} &\textbf{99.32}&\textbf{99.32} &80.98&\textbf{79.20} &\textbf{76.79}&\textbf{76.31}&\textbf{73.69}  &\textbf{70.40}   &\textbf{38.21}&\textbf{46.70}\\
      \midrule
      TRADES    &99.40  &99.36    &99.21&99.19 &78.82&77.66&75.94&75.58&81.33  &79.15    &\textbf{55.02} &52.40 \\
      +IGR&99.40  &\textbf{99.40}    &\textbf{99.26}&\textbf{99.24} &\textbf{80.61}&\textbf{79.05}&\textbf{76.89}&\textbf{76.44}&\textbf{81.65}  &\textbf{79.54}    &54.65  &\textbf{52.43} \\
      \midrule
      MART      &99.39  &99.29    &99.09&99.08 &79.43&79.36 &77.91&77.49&78.97  &77.19    &56.05 &50.99\\
      +IGR  &\textbf{99.51}   &\textbf{99.39} &\textbf{99.28} &\textbf{99.24} &\textbf{81.51}&\textbf{82.13}&\textbf{79.93}&\textbf{79.01}&\textbf{79.27}   &\textbf{77.35} &\textbf{56.47}  &\textbf{51.11}\\
      \bottomrule
    \end{tabular}}
\end{table}
\subsection{Evaluation on white-box adversarial robustness}

\begin{figure}
    \centering
    \begin{subfigure}{.32\textwidth}
        \centering
        \includegraphics[width=\textwidth]{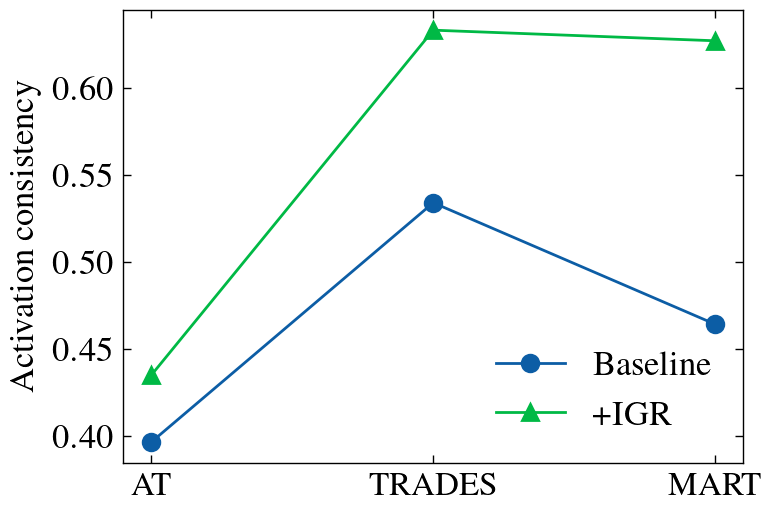}
        \caption{MNIST}
    \end{subfigure}
    \hfill
    \begin{subfigure}{.32\textwidth}
        \centering
        \includegraphics[width=\textwidth]{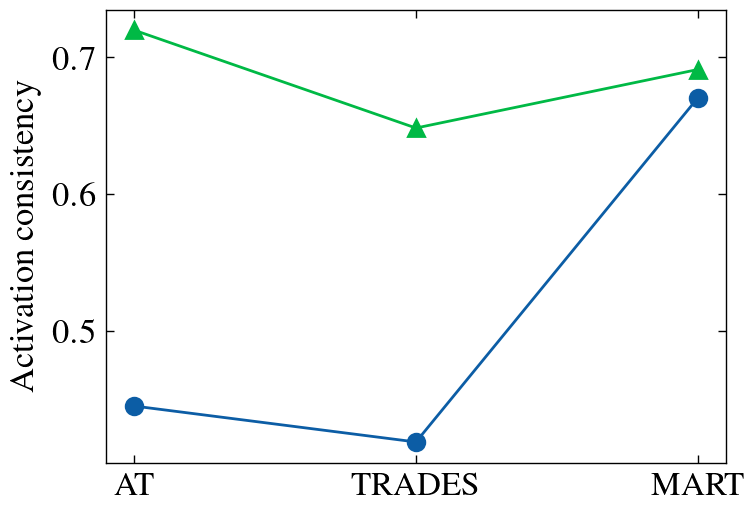}
        \caption{Fashion-MNIST}
    \end{subfigure}
    \hfill
    \begin{subfigure}{.32\textwidth}
        \centering
        \includegraphics[width=\textwidth]{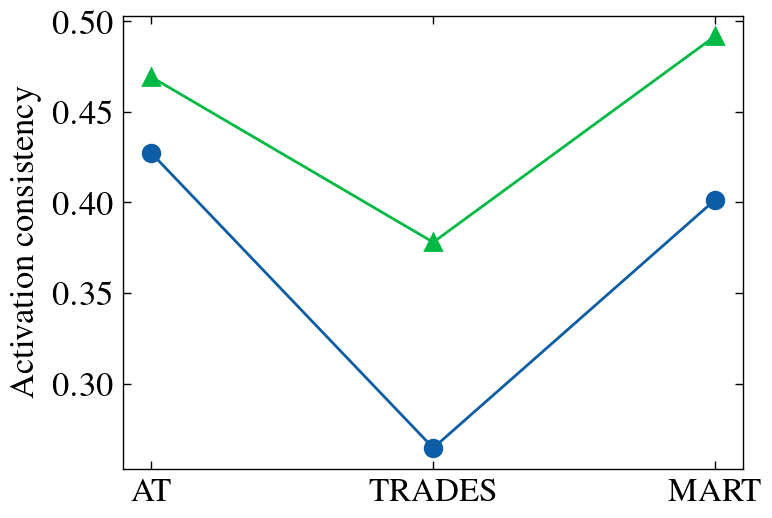}
        \caption{CIFAR-10}
    \end{subfigure}
    \caption{Activation consistency of baseline models and IGR models in MNIST, Fashion-MNIST and CIFAR-10. }\label{fig:activation}
\end{figure}

To evaluate the performance of IGR on adversarial robustness, the trained defense models are evaluated under white-box adversarial attacks, including FGSM \citep{goodfellow2014explaining}, PGD \citep{madry2018towards} and CW$_\infty$ \citep{carlini2017towards}, where all the attacks have the information of the entire models, including architectures and parameters. The numbers are reported in both natural accuracy (Natural) and adversarial accuracies under FGSM, PGD with 20 steps (PGD20), and CW$_\infty$ attacks. The maximum allowable perturbations are chosen to be $\varepsilon=0.3$ for MNIST and Fashion-MNIST, and $\varepsilon=8/255$ for CIFAR-10 as previous studies. The white-box adversarial accuracy results, as well as natural accuracy are reported in Table~\ref{tbl:adv_rob}.

As shown in Table~\ref{tbl:adv_rob}, defense methods trained with IGR achieve higher accuracies under all three types of attacks upon their corresponding baseline methods, except TRADES+IGR under PGD attack in CIFAR-10. 
In the meantime, classification accuracies of natural images are also improved in seven out of nine evaluations. This suggests that training with IGR improves adversarial accuracies without losing the generalization of natural accuracies. Although IGR is designed for attribution protection, these improvements is considered as a side-effect and a rigorous study of the phenomenon can be future work.

\subsection{Evaluation on activation consistency}

This section reports the experimental results that verify the claim in Section~\ref{sec:activation} --- IGR encourages that the network activates the same set of neurons for natural and perturbed samples $\bm{x}$ and $\tilde{\bm{x}}$. During the experiments, all the pre-activation values are recorded and used to compute the proportion of nonnegative values. Thus, the activation consistency defined on the right-hand side of Eq.~(\ref{eqn:activation}) can be numerically computed.

Fig.~\ref{fig:activation} compares the activation consistency on the baselines and the corresponding models trained with IGR. It is noticed that for all the datasets, the activation consistency of the models trained with IGR are consistently greater than the corresponding baseline models, which verifies our theory in Section~\ref{sec:activation}. Moreover, as reported in Table~\ref{tbl:comp-attr-rob}, the improvements of AT+IGR in MNIST and MART+IGR in Fashion-MNIST are not as significant as others. The results are also reflected on activation consistency, as the value of activation consistency slightly improves from 0.40 to 0.43 on AT+IGR in MNIST and from 0.67 to 0.69 on MART+IGR in Fashion-MNIST, while TRADES+IGR that boosts the most in attribution robustness also increases the most in activation consistency.

\section{Conclusions}\label{sec:conclusion}
In order to leverage the non-differentiable Kendall's rank correlation for attribution protection,  this paper starts with a theorem indicating the positive correlation between cosine similarity and Kendall's rank correlation. We then introduce a geometric perspective to explain the shortcomings of $\ell_p$ based attribution defense methods and propose the integrated gradients regularizer to improve attribution robustness. It is discovered that IGR encourages networks activating the same set of neurons for natural and perturbed samples. Finally, experiments show that IGR can be combined with adversarial objective functions, which simultaneously minimizes the angle between attribution vectors for attribution robustness and maximizes their magnitude to offer standard adversarial protection.

\begin{ack}
    This work is partially supported by the Ministry of Education, Singapore through Academic Research Fund Tier 1, RG73/21
\end{ack}

\bibliographystyle{citation}
\bibliography{nips22}

\newpage
\appendix
\onecolumn
\section{Proofs}\label{apx:math_detail}
\subsection{Proof and discussion of Theorem \ref{theo:kends}}\label{apx:proof_kends}
\thmkends*
\begin{proof}
    To prove this theorem, we show that the property holds when $N=2$, \emph{i.e.}, $\mathcal{S} = \left\{X, X'\right\}$, which indicates that each one of the above operations on $X$ would preserve the order of Kendall's rank correlation. The case when $N\geq 3$ can be trivially generalized using mathematical induction.

    Since $X$ and $X'$ are two arbitrary vectors, it is safe to fix $X$ that $X=(x_1, x_2, \ldots, x_d)$. To analyze the cosine similarities and Kendall's rank correlation, $X$ can be assumed to be in descending order, \emph{i.e.}, $x_1>x_2>\cdots>x_d$, since the order of $X'$ and $Y$ can be changed correspondingly without affecting the cosine similarities and Kendall's rank correlation.
    Formally, we show in the following proof that for a random vector $Y$ following exponential distribution and an arbitrary vector $X$, if the cosine similarities satisfy that $\cos(X, Y)\geq \cos(X', Y)$, then their corresponding Kendall's rank correlations have the property that $\mathbb{E}\left[\tau(X, Y)\right] \geq \mathbb{E}\left[\tau(X', Y)\right]$, where $X'$ is generated from $X$ by (1) exchanging two entries and (2) scalar multiplications. 

    \paragraph{(1) Preservation under exchanging} Following the assumption, we consider a random vector $Y=(y_1, y_2, \ldots, y_d)$ where $y_i$ positively distributed, and another vector $X=(x_1, x_2, \ldots, x_d)$.  We define the new vector $X^\prime$ that is produced by arbitrarily exchanging two entries in $X$. Suppose we exchange the $p$-th and $q$-th entry in $X$, where $1\leq p<q\leq d$, then $X'=(x_1, \ldots, x_{p-1}, x_q, x_{p+1}, \ldots, x_{q-1}, x_p, x_{q+1}, \ldots, x_d)$. We further assume both $X$ and $Y$ are normalized, \emph{i.e.}, $\Vert X\Vert=\Vert Y\Vert = \Vert X'\Vert = 1$.

    Now if we consider the cosine similarity and the assumption that $\cos(X, Y) > \cos(X', Y)$, we then have
    \begin{equation}
        \cos(X, Y) = \sum_{i=1}^d x_i y_i > \cos(X^\prime, Y) = \sum_{i\neq p, q}^d x_i y_i + x_p y_q + x_q y_p,
    \end{equation}
    which can be simplified as
    \begin{equation}
        (x_p -x_q)(y_p-y_q)>0
    \end{equation}
    For Kendall's rank correlation, we denote that $\Omega(X, Y) = \sum_{i<j}\sgn(x_i-x_j)\sgn(y_i-y_j)$, and $\tau(X, Y) = \frac{2}{d(d-1)}\Omega(X, Y)$. We notice that the difference between $\Omega(X, Y)$ and $\Omega(X', Y)$ only occurs when $x_p$ and $x_q$ are involved. We can write down the explicit expression of $\Omega(X, Y) - \Omega(X', Y)$
    \begin{align}
        \Omega(X, Y) - \Omega(X', Y) = &\left(\sgn(x_p-x_q)-\sgn(x_q-x_p)\right)\sgn(y_p-y_q)\nonumber\\
        &+\sum_{p<i<q}\left(\sgn(x_p-x_i)-\sgn(x_q-x_i)\right)\sgn(y_p-y_i)\nonumber \\
        &+\sum_{p<i<q}\left(\sgn(x_i-x_q)-\sgn(x_i-x_p)\right)\sgn(y_i-y_q) \\
        = &2 + 2\sum_{p<i<q}\left(\sgn(y_p-y_i)+\sgn(y_i-y_q)\right)\label{eqn:diff_perm}
    \end{align}
    Since%

    \begin{equation}
        \mathbb{E}\left[\left.\sum_{p<i<q}\sgn(y_p-y_i)+\sgn(y_i-y_q)\right\vert y_p-y_q>0\right] \geq 0, \label{eqn:exp_perm}
    \end{equation}

    we then have,
    \begin{equation}
        \mathbb{E}\left[\Omega(X, Y)\right] \geq \mathbb{E}\left[\Omega(X', Y)\right].
    \end{equation}
    
    \paragraph{(2) Preservation under scalar multiplication}
    We use the assumptions mentioned before that $y_i$ is positive-valued, and $x_1>x_2>\cdots> x_d>0$. Without loss of generality, we multiply $x_1$ by a scalar $\alpha\in[0, 1]$, such that $x_2>\alpha x_1>x_3$, \emph{i.e.}, $X'=(\alpha x_1, x_2, \ldots, x_d)$.

    To compare $\tau(X, Y)$ and $\tau(X', Y)$, it is noticed that, under our assumptions, only the sign of $y_1-y_2$ is needed as other terms in $\Omega$ are equal for $\Omega(X, Y)$ and $\Omega(X', Y)$,
    \begin{equation}
        \Omega(X, Y) - \Omega(X', Y) = \sgn(x_1 - x_2)\sgn(y_1 - y_2) - \sgn(\alpha x_1 - x_2)\sgn(y_1 - y_2) = 2\sgn(y_1 - y_2).
    \end{equation}

    Under the condition that the cosine similarity $\cos(X, Y)>\cos(X', Y)$, we have
    \begin{equation}
        x_1y_1+x_2y_2+\cdots+x_dy_d \geq \frac{\alpha x_1y_1 + x_2y_2+\cdots + x_dy_d}{\sqrt{\alpha^2 x_1^2 + x_2^2 + \cdots + x_d^2}}.
    \end{equation}
    Note that $\Vert X\Vert = \Vert Y \Vert = 1$. For simplicity, we denote that $A = \sqrt{\alpha^2 x_1^2 + x_2^2 + \cdots + x_d^2}$, and it is obvious that $\alpha\leq A\leq 1$, where $A$ is close to 1 when $d$ is large, %
    \begin{equation}
        x_1y_1 + x_2y_2 + \left(1-\frac{1}{A}\right)\left(\sum_{i=3}^d x_iy_i\right) \geq \frac{\alpha x_1y_1+x_2y_2}{A},
    \end{equation}
    which can be relaxed as
    \begin{equation}
        x_1y_1 + x_2y_2 \geq \frac{\alpha x_1y_1+x_2y_2}{A},
    \end{equation}
    \emph{i.e.},
    \begin{equation}
        y_1 \geq Ky_2
    \end{equation}
    where $K = \frac{(1-A)x_2}{(A-\alpha)x_1} > 0$. Thus, we want to show that 
    \begin{align}
        \mathbb{E}\left[\left.\sgn(y_1-y_2)\right|y_1 \geq Ky_2\right] &= \mathbb{P}(\sgn(y_1-y_2)\geq 0|y_1 \geq Ky_2) - \mathbb{P}(\sgn(y_1-y_2)<0|y_1 \geq Ky_2)\\
        &= 2\mathbb{P}(\sgn(y_1-y_2)\geq 0|y_1 \geq Ky_2) - 1\geq 0
    \end{align}
    
    We consider two cases when $K\geq 1$ and $0<K<1$. In the case that $K\geq 1$, it is obvious that 
    \begin{equation}
        \mathbb{P}(\sgn(y_1-y_2)\geq 0|y_1 \geq Ky_2)=1.
    \end{equation}
    If $0<K<1$, 
    \begin{align}
        \mathbb{P}(\sgn(y_1-y_2)\geq 0|y_1 \geq Ky_2) &= \frac{\mathbb{P}(\sgn(y_1-y_2)\geq 0 \cap y_1\geq Ky_2)}{\mathbb{P}(y_1\geq Ky_2)}\\
        &= \frac{\mathbb{P}(\sgn(y_1-y_2)\geq 0)}{\mathbb{P}(y_1\geq Ky_2)} \geq \frac{1}{2}
    \end{align}
    Thus, after combining the above two cases, we have
    \begin{equation}
        \mathbb{P}(\sgn(y_1-y_2)\geq 0|y_1 \geq Ky_2) \geq \frac{1}{2}
    \end{equation}
    which concludes our proof.

\end{proof}

\paragraph{Discussions}\label{apx:assmp_kends}
In practice, the attribution values are taken absolute values to emphasize the importance of features, regardless of whether the impact are positive or negative. Thus, without loss of generality, $y_i$ is assumed to follow a positive-valued distribution in Theorem~\ref{theo:kends}. We also consider the existence of the sequence $\mathcal{S}$ as an assumption that assist the formulation of the theorem. Although searching for such sequence of every pair of attributions $X$ and $X'$ can be a combinatorial problem and is constrained by computation power, the numerical simulations of finding such sequences in lower dimensions still show a high success rate ($\geq 0.8$ when $d\leq 10$), and the number of possible sequences increases drastically when the dimension is higher.

\subsection{Proof of Proposition \ref{prop:activation}}\label{apx:proof_act}
\propactivation*
\begin{proof}
  Recall that $\bm{x}\in\mathbb{R}^d$ is an input image, and the network function $f$ is parameterized by $(\bm{W}, \bm{u}, c)\in\mathbb{R}^{d\times m} \times \mathbb{R}^m \times \mathbb{R}$, where $\bm{W}_i$ is the column vector of $\bm{W}$, $w_{ij}$ is the $ij$-th entry of matrix $\bm{W}$ and $u_i$ is the $i$-th entry of vector $\bm{u}$, \emph{i.e.}, $f(\bm{x}) =  \bm{u}^\top\text{ReLU}(\bm{W}^\top \bm{x})+c$. 

  Following the above notations, we first write the function as
  \begin{equation}
    f(\bm{x}) = \bm{u}^\top\textmd{ReLU}(\bm{W}^\top\bm{x})+c = \sum_{i=1}^m u_i(\bm{W}_i^\top\bm{x})\mathbbm{1}_{\bm{W}^\top_i\bm{x}> 0} + c,
  \end{equation}
  where $\mathbbm{1}_{\{\cdot\}}$ denotes the indicator function, and its gradient
  \begin{equation*}
    \nabla_{x_k}f(\bm{x}) = (\nabla f(\bm{x}))_k = \sum_{i=1}^m u_i w_{ki}\mathbbm{1}_{\bm{W}_i^\top \bm{x} > 0}
  \end{equation*}
  We assume the bias terms are zeros without loss of generality, \emph{i.e.}, $c=0$, and approximate the cosine similarity of IG using the small variance assumption that $\frac{1}{n}\sum_i x_i^2-(\frac{1}{n}\sum_ix_i)^2\approx 0$ as
  \begin{align}
    &\cos(\textmd{IG}(\bm{x}), \textmd{IG}(\tilde{\bm{x}}))\nonumber\\
    \approx&\frac{\displaystyle\sum_{i=1}^m\sum_{j=1}^m\langle \bm{W}_i, \bm{W}_j\rangle u_i u_j\int_0^1\mathbbm{1}_{\bm{W}_i^\top(r\bm{x})>0}\,dr\int_0^1\mathbbm{1}_{\bm{W}_j^\top (r\tilde{\bm{x}})>0}\, dr}{\displaystyle\sqrt{\sum_{i=1}^m\sum_{j=1}^m\langle \bm{W}_i, \bm{W}_j\rangle u_i u_j\int_0^1\mathbbm{1}_{\bm{W}_i^\top(r\bm{x})>0}\,dr}\sqrt{\sum_{i=1}^m\sum_{j=1}^m\langle \bm{W}_i, \bm{W}_j\rangle u_i u_j\int_0^1\mathbbm{1}_{\bm{W}_i^\top (r\tilde{\bm{x}})>0}\,dr}}
\end{align}
Since $\langle \bm{W}_i, \bm{W}_j\rangle$ is close to 0 in high dimensional space when $i\neq j$, we approximate the above expression as
\begin{equation}
    \frac{\displaystyle\sum_{i=1}^m\Vert \bm{W}_i\Vert^2_2 u_i^2 \int_0^1\mathbbm{1}_{\bm{W}_i^\top(r\bm{x})>0}\,dr\int_0^1\mathbbm{1}_{\bm{W}_i^\top (r\tilde{\bm{x}})>0}\, dr}{\displaystyle\sqrt{\sum_{i=1}^m\Vert \bm{W}_i\Vert^2_2 u_i^2 \int_0^1\mathbbm{1}_{\bm{W}_i^\top(r\bm{x})>0}\,dr}\sqrt{\sum_{i=1}^m\Vert \bm{W}_i\Vert^2_2 u_i^2 \int_0^1\mathbbm{1}_{\bm{W}_i^\top (r\tilde{\bm{x}})>0}\,dr}}
\end{equation}

Notice that the indicator function is integrated from $0$ to $1$, which does not affect the sign of $\bm{W}_i^\top (r\bm{x})$ and $\bm{W}_i^\top (r\tilde{\bm{x}})$, \emph{i.e.}, the activation states. This implies that the activation states is the same for all samples from baseline to the corresponding image. Thus, we can write the cosine similarity as
\begin{equation}
    \frac{\displaystyle\sum_{i=1}^m\Vert \bm{W}_i\Vert_2^2u_i^2\mathbbm{1}_{\bm{W}_i^\top \bm{x}>0}\mathbbm{1}_{\bm{W}_i^\top\tilde{\bm{x}}>0}}{\displaystyle\sqrt{\sum_{i=1}^m\Vert \bm{W}_i\Vert_2^2u_i^2\mathbbm{1}_{\bm{W}_i^\top \bm{x}>0}}\sqrt{\sum_{i=1}^m\Vert \bm{W}_i\Vert_2^2u_i^2\mathbbm{1}_{\bm{W}_i^\top \tilde{\bm{x}}>0}}}
\end{equation}
Since $\bm{W}_i$ and $u_i$ are independent random variables following Gaussian distributions, \emph{i.e.}, $\bm{W}_i\sim\mathcal{N}(0, \sigma_w^2 I_d)$ and $u_i\sim\mathcal{N}(0, \sigma_u^2)$, when $m$ is sufficiently large, we have
\begin{equation}
    \frac{1}{m}\sum_{i=1}^m\Vert \bm{W}_i\Vert_2^2 u_i^2\mathbbm{1}_{\bm{W}_i^\top \bm{x}>0}\mathbbm{1}_{\bm{W}_i^\top\tilde{\bm{x}}>0}=\mathbb{E}_{W, u}\left[\Vert W\Vert_2^2 u^2\mathbbm{1}_{W^\top \bm{x}>0}\mathbbm{1}_{W^\top\tilde{\bm{x}}>0}\right]    
\end{equation}
The cosine similarity is then transformed into expectations
\begin{equation}
    \cos(\textmd{IG}(\bm{x}), \textmd{IG}(\tilde{\bm{x}})) \approx \frac{\displaystyle\mathbb{E}_{W, u}\left[\Vert W\Vert_2^2 u^2\mathbbm{1}_{W^\top \bm{x}>0}\mathbbm{1}_{W^\top\tilde{\bm{x}}>0}\right]}{\displaystyle\sqrt{\mathbb{E}_{W, u}\left[\Vert W\Vert_2^2 u^2\mathbbm{1}_{W^\top \bm{x}>0}\right]}\sqrt{\mathbb{E}_{W, u}\left[\Vert W\Vert_2^2 u^2\mathbbm{1}_{W^\top \tilde{\bm{x}}>0}\right]}}
\end{equation}
Based on the assumption on Gaussian distribution, we have $\mathbb{E}\left[\Vert W\Vert_2^2\right]=\text{tr}(\sigma_w^2 I_d)=d\sigma_w^2$ and $\mathbb{E}\left[u^2\right]=\sigma_u^2$, and
\begin{align}
  \cos(\textmd{IG}(\bm{x}), \textmd{IG}(\tilde{\bm{x}})) &\approx \frac{\displaystyle\sigma_u^2\mathbb{E}_{W}\left[\Vert W\Vert_2^2\mathbbm{1}_{W^\top\bm{x}>0}\mathbbm{1}_{W^\top\tilde{\bm{x}}>0}\right]}{\displaystyle\sigma_u\sqrt{\mathbb{E}_{W}\left[\Vert W\Vert_2^2 \mathbbm{1}_{W^\top\bm{x}>0}\right]}\sigma_u\sqrt{\mathbb{E}_{W}\left[\Vert W\Vert_2^2 \mathbbm{1}_{W^\top \tilde{\bm{x}}>0}\right]}}\\
  &=\frac{\displaystyle\mathbb{E}_{W}\left[\Vert W\Vert_2^2\mathbbm{1}_{W^\top\bm{x}>0}\mathbbm{1}_{W^\top\tilde{\bm{x}}>0}\right]}{\displaystyle\sqrt{\mathbb{E}_{W}\left[\Vert W\Vert_2^2 \mathbbm{1}_{W^\top\bm{x}>0}\right]}\sqrt{\mathbb{E}_{W}\left[\Vert W\Vert_2^2 \mathbbm{1}_{W^\top \tilde{\bm{x}}>0}\right]}}\\
  &= \frac{d\sigma_w^2\mathbb{P}(W^\top\bm{x}>0\cap W^\top\tilde{\bm{x}}>0)}{d\sigma_w^2\sqrt{\mathbb{P}(W^\top\bm{x}>0)\mathbb{P}(W^\top \tilde{\bm{x}}>0)}}\label{eqn:prob}\\
  &=\frac{\mathbb{P}(W^\top\bm{x}>0\cap W^\top\tilde{\bm{x}}>0)}{\sqrt{\mathbb{P}(W^\top\bm{x}>0)\mathbb{P}(W^\top \tilde{\bm{x}}>0)}}
\end{align}

\end{proof}

\section{Unstable Pearson's correlation}\label{apx:pearson}
In this section, we discuss the unstable Pearson's correlation for small variance inputs, \emph{i.e.}, $\frac{1}{n}\sum_{i=1}^n x_i^2 - \left(\frac{1}{n}\sum_{i=1}^n x_i\right)^2\approx 0$. Consider the Pearson's correlation between $\bm{x}$ and $\bm{x}+\bm{\eta}$, where $\bm{\eta}$ is vector and bounded by $\Vert\bm{\eta}\Vert\leq \epsilon$ for small $\epsilon$. Then the Pearson's correlation between $\bm{x}$ and $\bm{x}+\bm{\eta}$ can be written as
\begin{equation}
    \rho(\bm{x}, \bm{x}+\bm{\eta}) = \frac{\frac{1}{n}\sum_{i=1}^n x_i(x_i+\eta_i) - \left(\frac{1}{n}\sum_{i=1}^n x_i\right)\left(\frac{1}{n}\sum_{i=1}^n x_i + \eta_i\right)}{\sqrt{\frac{1}{n}\sum_{i=1}^n x_i^2 - \left(\frac{1}{n}\sum_{i=1}^n x_i\right)^2}\sqrt{\frac{1}{n}\sum_{i=1}^n (x_i+\eta_i)^2 - \left(\frac{1}{n}\sum_{i=1}^n (x_i+\eta_i)\right)^2}}
\end{equation}
Consider the numerator of $\rho(\bm{x}, \bm{x}+\bm{\eta})$ as
\begin{align}
    N_{\rho(\bm{x}, \bm{x}+\bm{\eta})} &= \frac{1}{n}\sum_{i=1}^n x_i^2 + \frac{1}{n}\sum_{i=1}^n x_i\eta_i - \left(\frac{1}{n}\sum_{i=1}^n x_i\right)^2 - \left(\frac{1}{n}\sum_{i=1}^n x_i\right)\left(\frac{1}{n}\sum_{i=1}^n \eta_i\right)\\
    &\approx \frac{1}{n}\sum_{i=1}^n x_i\eta_i - \left(\frac{1}{n}\sum_{i=1}^n x_i\right)\left(\frac{1}{n}\sum_{i=1}^n \eta_i\right)
\end{align}
Similarly, we can obtain
\begin{equation}
    N_{\rho(\bm{x}, \bm{x}-\bm{\eta})}\approx -\frac{1}{n}\sum_{i=1}^n x_i\eta_i + \left(\frac{1}{n}\sum_{i=1}^n x_i\right)\left(\frac{1}{n}\sum_{i=1}^n \eta_i\right)
\end{equation}
Thus, $N_{\rho(\bm{x}, \bm{x}+\bm{\eta})} \approx -N_{\rho(\bm{x}, \bm{x}-\bm{\eta})}$. Since $\frac{1}{n}\sum_{i=1}^n x_i^2 - \left(\frac{1}{n}\sum_{i=1}^n x_i\right)^2\approx 0$, the denominator of $\rho(\bm{x}, \bm{x}+\bm{\eta})$ and $\rho(\bm{x}, \bm{x}-\bm{\eta})$ are both small. Thus, $\rho(\bm{x}, \bm{x}+\bm{\eta})$ and $\rho(\bm{x}, \bm{x}-\bm{\eta})$ would be drastically different. However, since $\Vert\bm{\eta}\Vert$ is very small, both $\bm{x}+\bm{\eta}$ and $\bm{x}-\bm{\eta}$ are in fact close to $\bm{x}$. Therefore, the Pearson' correlation can be unstable.

\section{Additional experimental details and results}\label{apx:exp}
\subsection{Pseudo-code of IGR training}\label{apx:code}
Algorithm~\ref{alg:ATIGR} shows the pseudo-code for AT+IGR, where $\tilde{\bm{x}}$ is generated from PGD in adversarial training. Similarly, for TRADES+IGR and MART+IGR, $\tilde{\bm{x}}$ is obtained by replacing $\mathcal{L}_{at}$ using $\mathcal{L}_{trades}$ and $\mathcal{L}_{mart}$.
\begin{algorithm}[t]
    \caption{Adversarial Training with IGR}\label{alg:ATIGR}
 \begin{algorithmic}
    \STATE {\bfseries Input:} classifier $f$, data $\left\{\bm{x}^{(i)}, y^{(i)}\right\}_{i=1}^n$, number of PGD attack $n$, PGD step-size $\alpha$, maximum allowable perturbation $\varepsilon$, scaling parameter of IGR $\lambda$
    \FOR{$epoch\in\{1, 2, \ldots\}$}
    \STATE Compute $\textmd{IG}(\bm{x})$
    \STATE Randomly initiate $\tilde{\bm{x}} = \bm{x} + \mathcal{U}[-\varepsilon, \varepsilon]$
    \FOR{$i=1$ {\bfseries to} $n$}
    \STATE $\tilde{\bm{x}} = \tilde{\bm{x}} + \alpha * \sgn(\nabla\mathcal{L}_{at}(\tilde{\bm{x}}, y))$
    \STATE $\tilde{\bm{x}} = Proj_{\mathcal{B}_{\varepsilon}}(\tilde{\bm{x}})$
    \ENDFOR
    \STATE Compute $\textmd{IG}(\tilde{\bm{x}})$
    \STATE Compute loss $\mathcal{L}_{igr} = \mathcal{L}_{at}(\tilde{\bm{x}}, y) + \lambda(1-\cos(\textmd{IG}(\bm{x}), \textmd{IG}(\tilde{\bm{x}})))$
    \STATE Update model parameters $\bm{\theta}$ using $\mathcal{L}_{igr}$ 
    \ENDFOR
    \STATE {\bfseries Return} $f$
 \end{algorithmic}
 \end{algorithm}

\subsection{Implementation details of baseline attribution robustness methods}\label{apx:baseline}
The objective functions of the baseline attribution robustness methods are defined as follows.
\paragraph{IG-NORM \citep{chen2019robust}}  
\begin{equation}
    \mathbb{E}_\mathcal{D}\left[\mathcal{L}(\bm{x}, y;\bm{\theta}) + \lambda \max_{\tilde{\bm{x}}\in\mathcal{B}_{\varepsilon}(\bm{x})}\Vert\textmd{IG}(\bm{x}, \tilde{\bm{x}})\Vert_1\right]
\end{equation}
\paragraph{IG-SUM-NORM \citep{chen2019robust}}
\begin{equation}
    \mathbb{E}_\mathcal{D}\left[\max_{\tilde{\bm{x}}\in\mathcal{B}_{\varepsilon}(\bm{x})}\left\{\mathcal{L}(\tilde{\bm{x}}, y;\bm{\theta}) + \lambda \Vert\textmd{IG}(\bm{x}, \tilde{\bm{x}})\Vert_1\right\}\right]
\end{equation}
\paragraph{AdvAAT \citep{ivankay2020far}}
\begin{equation}
    \mathbb{E}_\mathcal{D}\left[\max_{\tilde{\bm{x}}\in\mathcal{B}_{\varepsilon}(\bm{x})}\left\{\mathcal{L}(\tilde{\bm{x}}, y;\bm{\theta}) + \lambda \textmd{PCL}(\textmd{IG}(\bm{x}), \textmd{IG}(\tilde{\bm{x}}))\right\}\right],
\end{equation}
where $\textmd{PCL}(\cdot) = 1-[\textmd{PCC}(\cdot) + 1]/2$ is derived from \emph{Pearson's Correlation Coefficient} ($\textmd{PCC}(\cdot))$. Different from AT\cite{madry2018towards}, AdvAAT adds a regularizer monitoring the attributions to the maximization problem. It generates perturbed samples that maximize both cross entropy and regularizer.
\paragraph{ART \citep{singh2020attributional}}
\begin{equation}
    \mathbb{E}_{\mathcal{D}}\left[\mathcal{L}(\tilde{\bm{x}}, y;\bm{\theta}) + \lambda \log\left(1 + \exp(-(d(g^\ast(\bm{x}), \bm{x}) - d(g^y(\bm{x}), x))\right)\right],
\end{equation}
where 
\begin{equation}
    d(g^i(\bm{x}), \bm{x}) = 1-\frac{g^i(\bm{x})^\top \bm{x}}{\Vert g^i(\bm{x})\Vert_2\Vert \bm{x}\Vert_2}, i^\ast = \argmax_{i\neq y}f(\bm{x})_i
\end{equation}
and
\begin{equation}
   \tilde{\bm{x}} = \argmax_{\tilde{\bm{x}}\in\mathcal{B}_{\varepsilon}(\bm{x})} \log\left(1 + \exp(-(d(g^\ast(\bm{x}), \bm{x}) - d(g^y(\bm{x}), x))\right) 
\end{equation}
\paragraph{SSR \citep{wang2020smoothed}}
\begin{equation}
    \mathbb{E}_{\mathcal{D}} = \left[\mathcal{L}(\bm{x}, y;\bm{\theta}) + \lambda s\max_i\xi_i\right].
\end{equation}
$\max_i \xi_i$ is the largest eigenvalue of Hessian matrix $\tilde{\bm{H}}_{\bm{x}}=W(diag(\bm{p}) - \bm{p}^\top\bm{p})W^\top$, where $W$ is the Jacobian matrix of the logits vector and $\bm{p}$ is the probits of the model.

\subsection{Additional visualization of attribution robustness}
In this section, additional visualizations are provided in Fig.~\ref{fig:add_vis_fmnist} and Fig.~\ref{fig:add_vis_cifar10} to demonstrate that IGR improves attribution robustness. The original and adversarial images from different datasets are shown in the first two columns. The remaining three columns are IG of the original images on baseline model, IG of the adversarial images on baseline model and IG of the adversarial images on baseline+IGR model, respectively. The baseline model in the visualizations is MART.

The first two columns are the original and adversarial images from Fashion-MNIST. The third column is the IG of the original image. The last two columns are IG of adversarial examples on a baseline model and the baseline model trained with IGR. Both baseline and baseline+IGR models make the correct classifications, while only baseline+IGR protects the model interpretations.
\begin{figure}
  \centering
  \includegraphics[width=.9\textwidth]{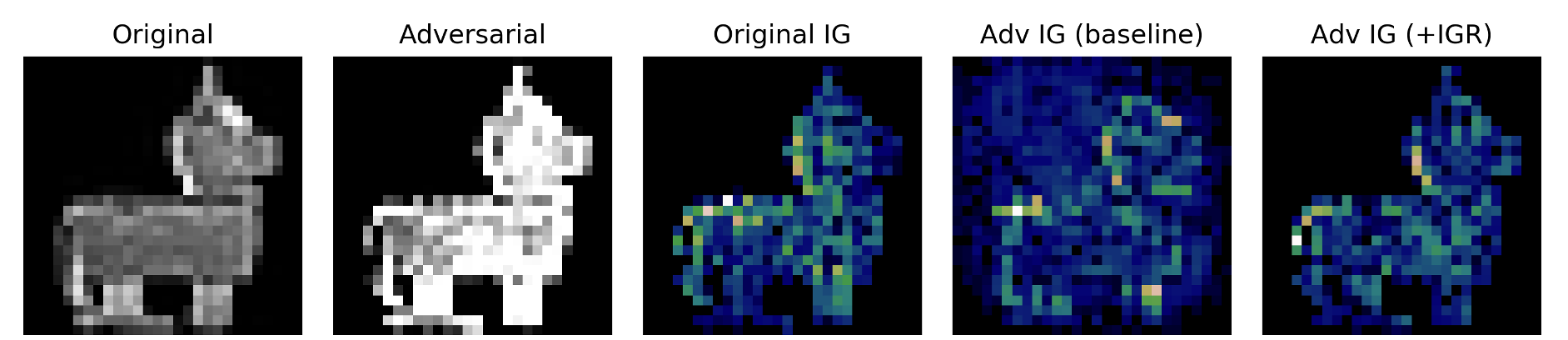}
  \includegraphics[width=.9\textwidth]{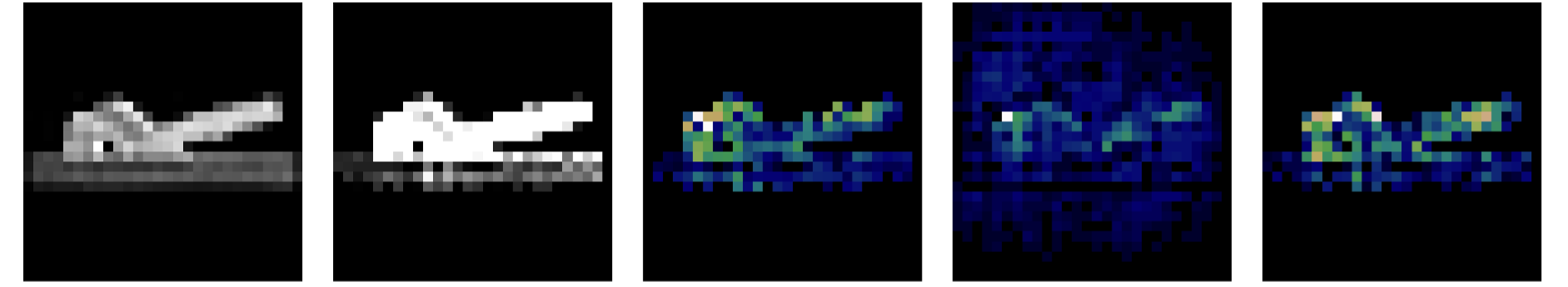}
  \includegraphics[width=.9\textwidth]{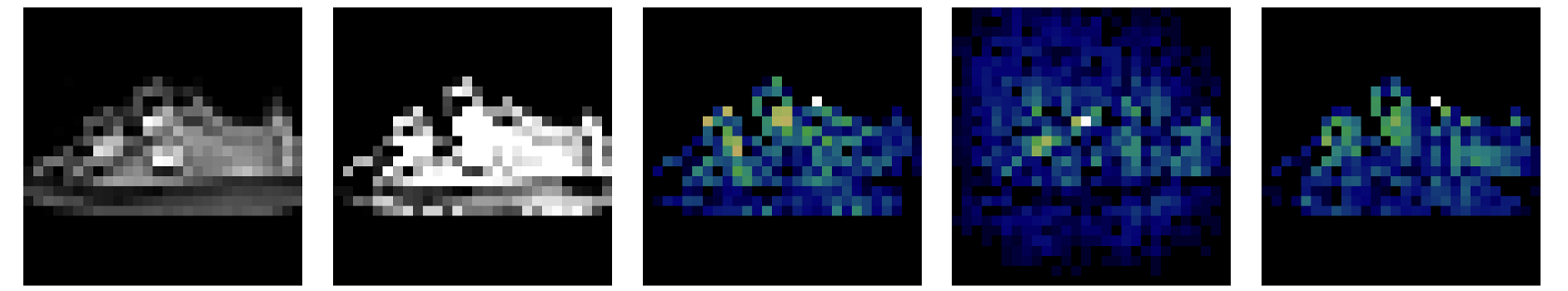}
  \includegraphics[width=.9\textwidth]{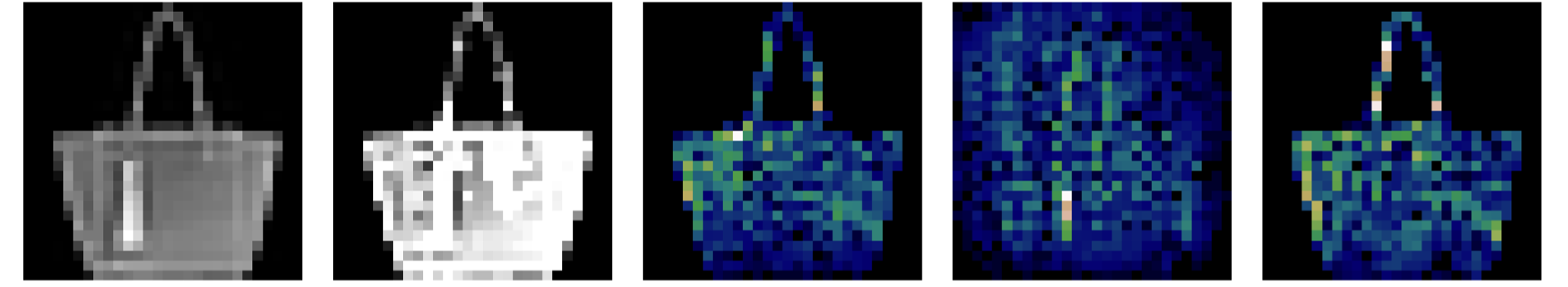}
  \includegraphics[width=.9\textwidth]{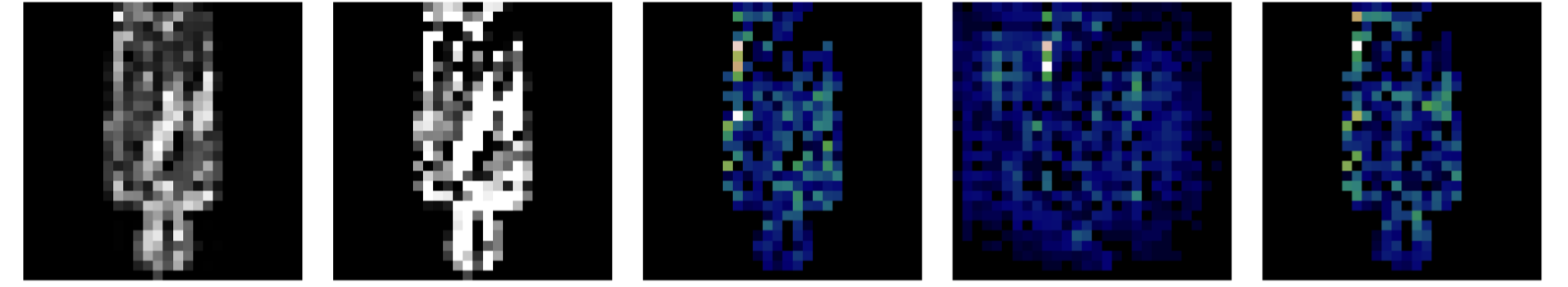}
  \caption{Additional visualization on Fashion-MNIST.}\label{fig:add_vis_fmnist}
\end{figure}

\begin{figure}
  \centering
  \includegraphics[width=.9\textwidth]{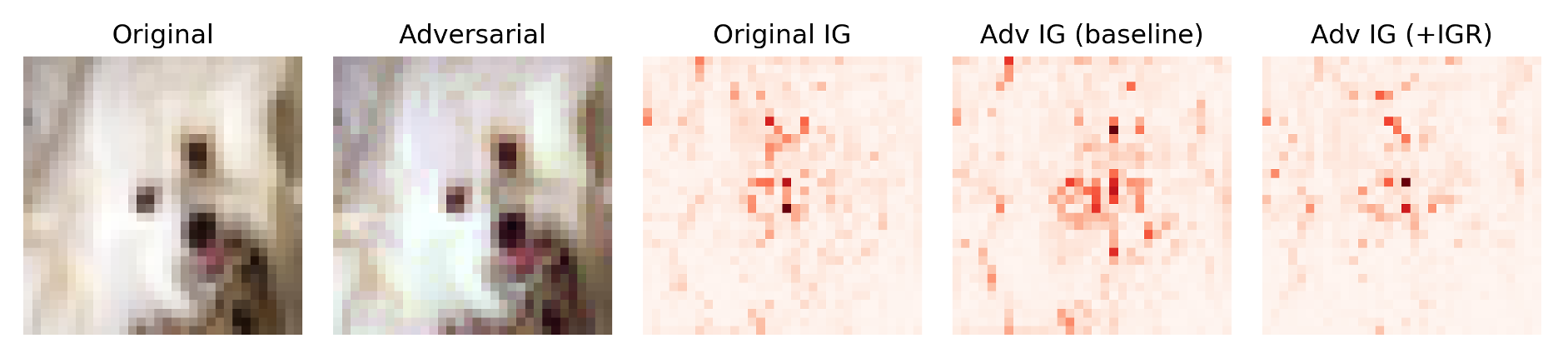}
  \includegraphics[width=.9\textwidth]{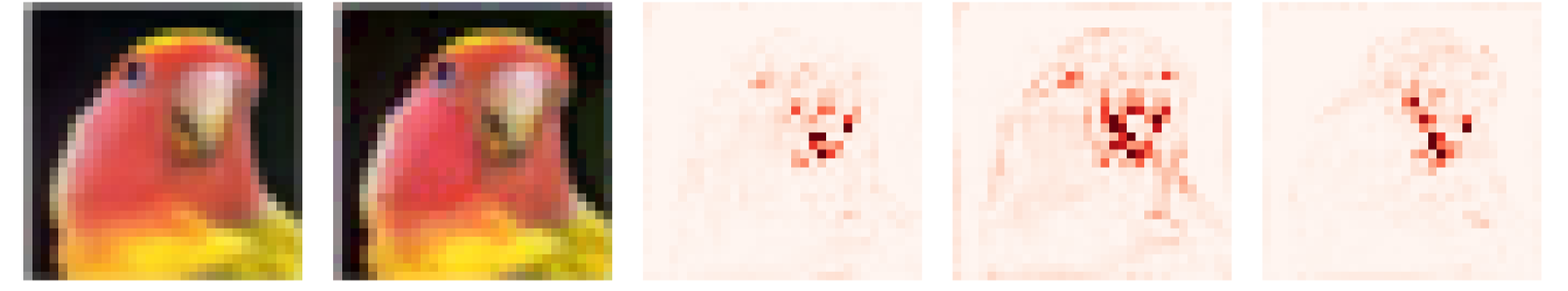}
  \includegraphics[width=.9\textwidth]{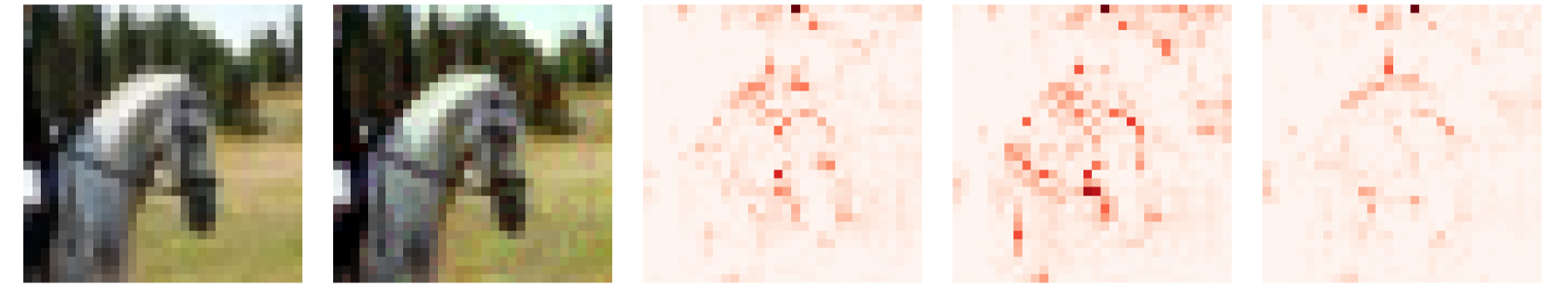}
  \includegraphics[width=.9\textwidth]{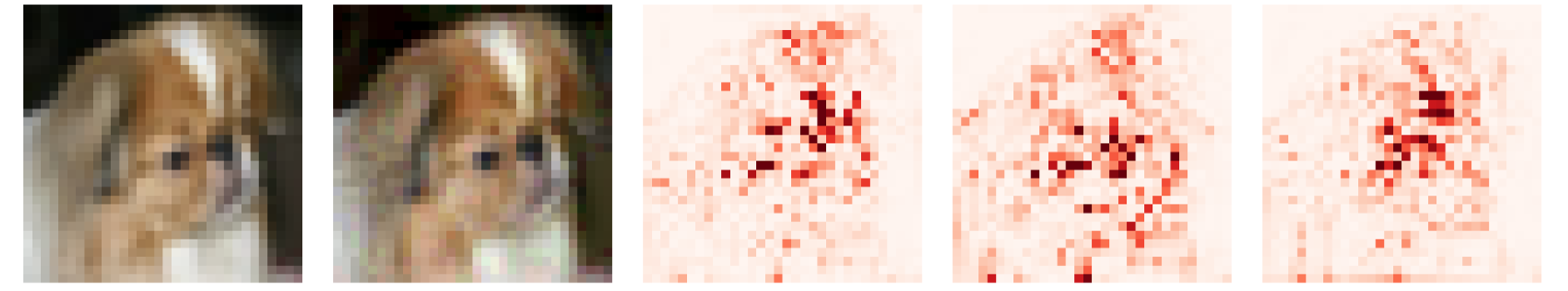}
  \includegraphics[width=.9\textwidth]{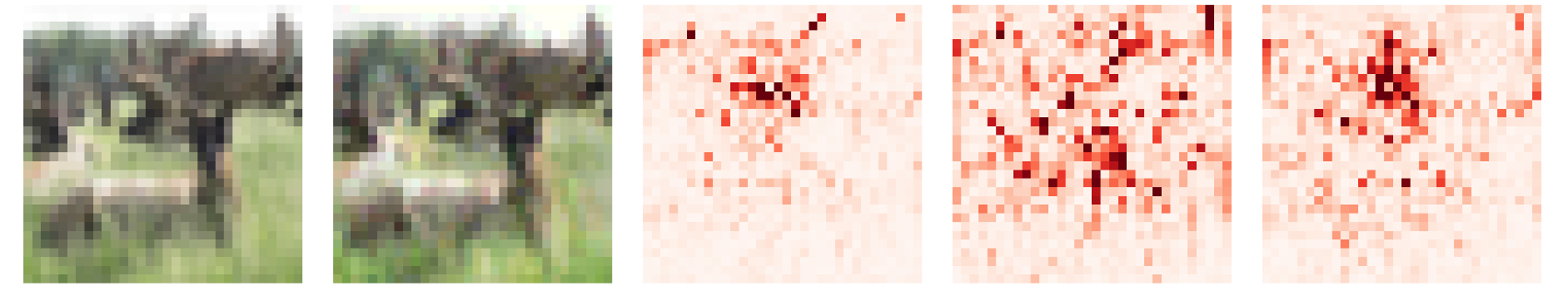}
  \caption{Additional visualization on CIFAR-10.}\label{fig:add_vis_cifar10}
\end{figure}

\subsection{Visualization of Kendall's rank correlation and Pearson's correlation}
Pearson's correlation against Kendall's rank correlation has been plotted in Fig.~\ref{fig:pearson_kend} under the same setting as Fig.~\ref{fig:kendall_cos}. For the same set of simulations, the corresponding Pearson's correlations are more randomly distributed.
\begin{figure}
    \centering
    \includegraphics[width=.4\textwidth]{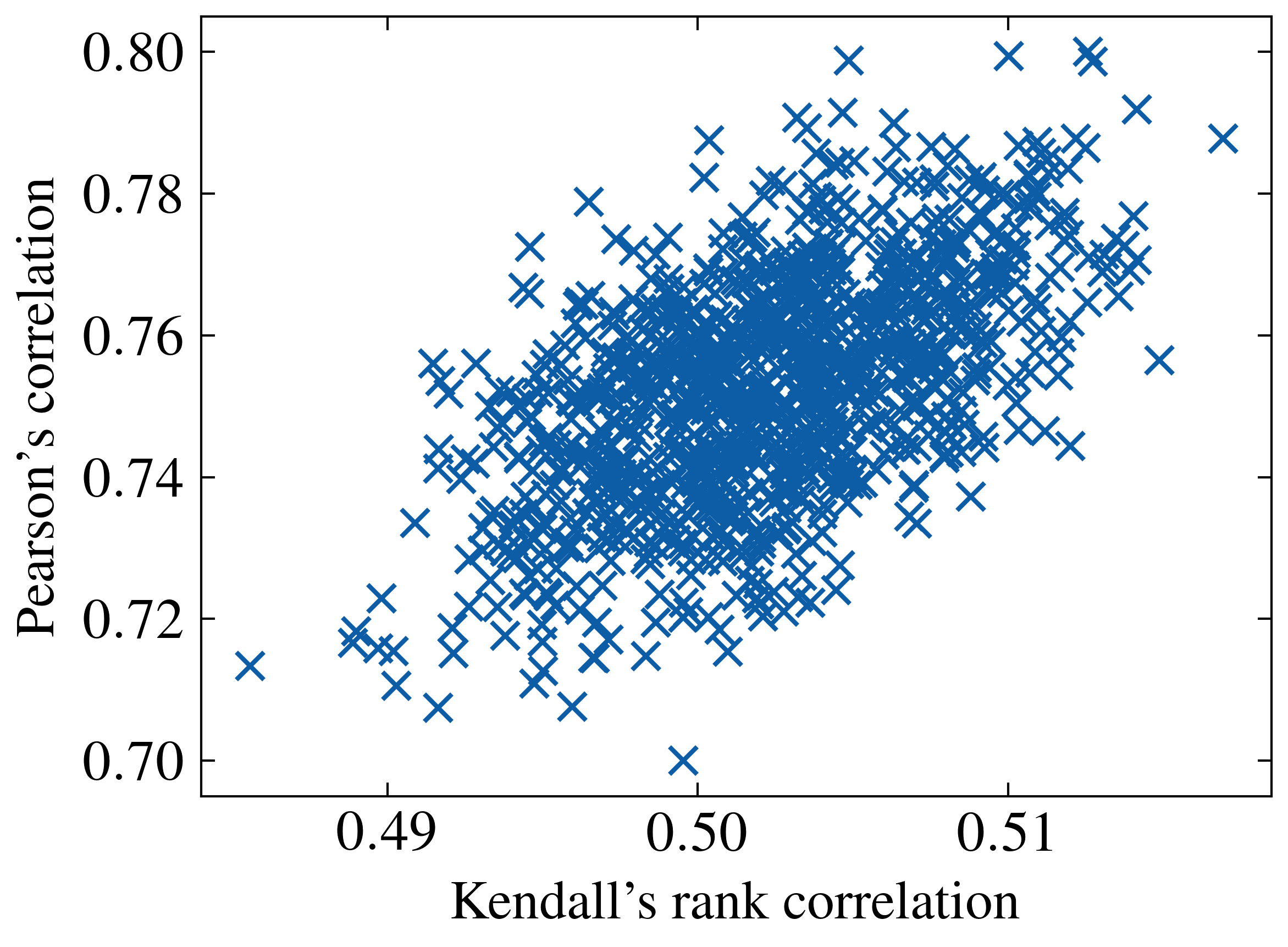}
    \caption{Visualization of Kendall's rank correlation and Pearson's correlation using simulated data.}\label{fig:pearson_kend}
\end{figure}

\end{document}